\theoremstyle{plain}
\newtheorem{theorem}{Theorem}[section]
\newtheorem{proposition}[theorem]{Proposition}
\newtheorem{lemma}[theorem]{Lemma}
\theoremstyle{definition}
\newtheorem{definition}[theorem]{Definition}
\theoremstyle{remark}
\newtheorem{remark}[theorem]{Remark}
\newtheorem{example}[theorem]{Example}
\def\1{\bm{1}}
\DeclareMathAlphabet{\mathsfit}{\encodingdefault}{\sfdefault}{m}{sl}
\SetMathAlphabet{\mathsfit}{bold}{\encodingdefault}{\sfdefault}{bx}{n}
\def\gA{{\mathcal{A}}}
\def\gB{{\mathcal{B}}}
\def\gC{{\mathcal{C}}}
\def\gF{{\mathcal{F}}}
\def\gG{{\mathcal{G}}}
\def\gH{{\mathcal{H}}}
\def\gL{{\mathcal{L}}}
\def\gM{{\mathcal{M}}}
\def\gO{{\mathcal{O}}}
\def\gT{{\mathcal{T}}}
\def\sD{{\mathbb{D}}}
\def\sL{{\mathbb{L}}}
\newcommand{\R}{\mathbb{R}}
\DeclareMathOperator*{\argmin}{arg\,min}
\newcommand{\inprod}[2]{\left\langle {#1}, {#2}\right\rangle}
\DeclareMathOperator{\diag}{diag}
\DeclareMathOperator{\cayley}{Cayley}
\newcommand{\prox}{\mathbf{prox}}
\newcommand{\dom}{\mathop{\bf dom}} 
\icmltitlerunning{Monotone, Bi-Lipschitz, and Polyak-\L{}ojasiewicz Networks\hfill\thepage}
\begin{document}

\twocolumn[
\icmltitle{Monotone, Bi-Lipschitz, and Polyak-\L{}ojasiewicz Networks}



\icmlsetsymbol{equal}{*}

\begin{icmlauthorlist}
\icmlauthor{Ruigang Wang}{acfr}
\icmlauthor{Krishnamurthy (Dj) Dvijotham}{google}
\icmlauthor{Ian R. Manchester}{acfr}
\end{icmlauthorlist}

\icmlaffiliation{google}{Google DeepMind}

\icmlaffiliation{acfr}{Australian Centre for Robotics, School of Aerospace, Mechanical and Mechatronic Engineering, The University of Sydney, Sydney, NSW 2006, Australia.}

\icmlcorrespondingauthor{Ruigang Wang}{ruigang.wang@sydney.edu.au}

\icmlkeywords{Machine Learning, ICML}

\vskip 0.3in
]



\printAffiliationsAndNotice{}  

\begin{abstract}
This paper presents a new \emph{bi-Lipschitz} invertible neural network, the BiLipNet, which has the ability to smoothly control both its \emph{Lipschitzness} (output sensitivity to input perturbations) and \emph{inverse Lipschitzness} (input distinguishability from different outputs). The second main contribution is a new scalar-output network, the PLNet, which is a composition of a BiLipNet and a quadratic potential. We show that PLNet satisfies the Polyak-\L{}ojasiewicz condition and can be applied to learn non-convex surrogate losses with a unique and efficiently-computable global minimum. The central technical element in these networks is a novel invertible residual layer with certified strong monotonicity and Lipschitzness, which we compose with orthogonal layers to build the BiLipNet. The certification of these properties is based on incremental quadratic constraints, resulting in much tighter bounds than can be achieved with spectral normalization. Moreover, we formulate the calculation of the inverse of a BiLipNet -- and hence the minimum of a PLNet -- as a series of three-operator splitting problems, for which fast algorithms can be applied.
\end{abstract}

\section{Introduction}
In many applications, it is desirable to learn neural networks with \emph{certified input-output behaviors}, i.e., certain properties that are guaranteed by design. For example, Lipschitz-bounded networks have proven to be beneficial for stabilizing of generative adversarial network (GAN) training \cite{arjovsky2017wasserstein,gulrajani2017improved}, certifying robustness against adversarial attacks \cite{tsuzuku2018lipschitz,singla2021skew,zhang2021towards,araujo2023unified,wang2023direct} and robust reinforcement learning \cite{russo2021towards,barbara2024robust}. 

\begin{figure}[!tb]
    \begin{minipage}{\linewidth}
    \centering
    \includegraphics[width=0.84\linewidth]{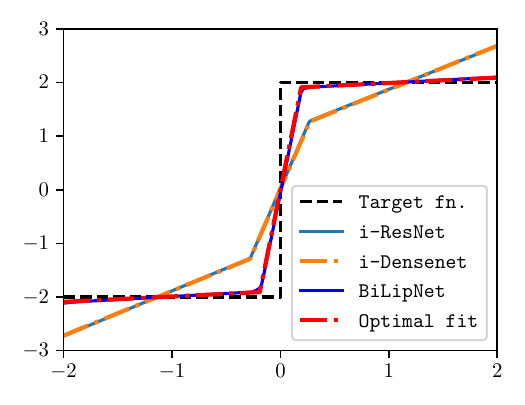}
    \end{minipage}
    \begin{minipage}{\linewidth}
    \centering
    \resizebox{0.82\textwidth}{!}{
    \begin{tabular}{c|ccc}
    \hline
    Model & inv. Lip. ($\downarrow$) & Lip. ($\uparrow$) & loss ($\downarrow$) \\
    \hline
    i-ResNet & 0.80 & 4.69 & 0.2090 \\
    i-DenseNet & 0.82 & 4.66 & 0.2091 \\
    \textbf{BiLipNet} & \textbf{0.11} & \textbf{9.97} & \textbf{0.0685} \\
    \hline
    Best Possible & 0.10 & 10.0 & 0.0677 \\
    \hline
    \end{tabular}
    }
    \end{minipage}
    \caption{Fitting a step function, which is not Lipschitz, with certified $(0.1, 10)$-Lipschitz models. Compared to the analytically-computed optimum, the proposed BiLipNet achieves much tighter bounds than models based on spectral normalization.}
    \label{fig:step}
\end{figure}

Another input-output property -- \emph{invertibility} has received much attention in the deep learning literature since the introduction of \emph{normalizing flows} \cite{dinh2014nice} for probability-density learning. Invertible neural networks have been applied in applications such as generative modeling \cite{dinh2016density,kingma2018glow}, probabilistic inference \cite{bauer2019resampled,ward2019improving,louizos2017multiplicative}, solving inverse problems \cite{ardizzone2018analyzing} and uncertainty estimation \cite{liu2020simple}. A common way to construct invertible networks is to compose invertible affine transformations with more sophisticated invertible layers, including coupling flows \cite{dinh2016density,kingma2018glow}, auto-regressive models \cite{huang2018neural,de2020block,ho2019flow},  invertible residual layers \cite{chen2019residual,behrmann2019invertible}, monotone networks \cite{ahn2022invertible}, and neural ordinary differential equations \cite{grathwohl2019scalable}, see also in the surveys \cite{papamakarios2021normalizing,kobyzev2020normalizing}. 

However, \cite{behrmann2021understanding} observed that commonly-used invertible networks suffer from exploding inverses and are thus prone to becoming numerically non-invertible. This observation motivates the input-output property of \emph{bi-Lipschitzness}. A layer $\gF:\R^n\rightarrow\R^n$ is said to be bi-Lipschitz with bound of $(\mu, \nu)$, or simply $(\mu, \nu)$-Lipschitz, if the following inequalities hold for all $x, x'\in \R^n$:
\[
\mu \|x-x'\|\leq \|\gF(x)-\gF(x')\|\leq \nu \|x-x'\|,
\]
where $\|\cdot\|$ is the Euclidean norm. The bound $\nu$ controls the output sensitivity to input perturbations while $\mu$ controls the input distinguishability from different outputs \cite{liu2020simple}. We call $\mu$ as the \emph{inverse Lipschitz} bound of $\gF$ as $\gF^{-1}$ exists and is $1/\mu$-Lipschitz. The ratio $\tau:=\nu/\mu$ is called \emph{distortion} \cite{liang2023low}, which is the upper bound of the condition number of the Jacobian matrix of  $\gF$. A larger distortion implies more expressive flexibility in the model. 

In this paper we argue that the bi-Lipschitz property is also useful for learning of surrogate loss (or reward) functions. Given some input/output pairs of a loss function, the objective is to learn a function which matches the observed data and is ``easy to optimize'' in some sense. This problem appears in many areas, including Q-learning with continuous action spaces, see e.g. \cite{gu2016continuous, amos2017input, ryu2019caql}, offline data-driven optimization \cite{grudzien2024functional}, learning reward models in inverse reinforcement learning \cite{arora2021survey}, and data-driven surrogate losses for engineering process optimization \cite{cozad2014learning, misener2023formulating}. An important contribution was the input convex neural network (ICNN) \cite{amos2017input}. However, the requirement of input convexity could be too strong in many applications.

\subsection{Contributions} 
\begin{itemize}
    \item We propose a novel strongly monotone and Lipschitz residual layer of the form $\gF(x)=\mu x+ \gH(x)$. For the nonlinear block $\gH$, we introduce a new architecture -- \emph{feed-through network} (FTN), which takes a multi-layer perceptron (MLP) as its backbone and adds connections from each hidden layer to the input and output variables. For deep networks, this architecture can improve the model expressivity without suffering from vanishing gradients. 
    
    \item We parameterize FTNs with certified  \emph{strong monotonicity} (which implies inverse Lipschitzness) and Lipschitzness for $\gF$ via the integral quadratic constraint (IQC) framework \cite{megretski1997system} and the Cayley transform. 

    \item By composing strongly-monotone and Lipschitz FTN layers with orthogonal affine layers we obtain the \textit{BiLipNet}, a new network architecture with smoothly-parameterized bi-Lipschitz bounds.
    
    \item We formulate the model inversion $\gF^{-1}$ as a three-operator splitting problem, which admits a numerically efficient solver \cite{davis2017three}.
    
    \item  We introduce a new scalar-output network $f:\R^n\rightarrow \R$, which we call a \textit{Polyak-\L{}ojasiewicz} network (or PLNet) since it satisfies the condition of the same name \cite{polyak1963gradient, lojasiewicz1963topological}. It consists of a bi-Lipschitz network composed with a quadratic potential, and automatically satisfies favourable properties for surrogate loss learning, in particular existence of a unique global optimum which is efficiently computable.
\end{itemize}

\subsection{Related work}

\paragraph{Bi-Lipschitz invertible layer.} In literature, there are two types of invertible layers closely related to our models. The first is the invertible residual layer $\gF(x)=x+\gH(x)$ \cite{chen2019residual,behrmann2019invertible}, where the nonlinear block $\gH$ is a shallow network with Lipschitz bound of $c < 1$. In \cite{perugachi2021invertible}, $\gH$ is further extended to a deep MLP. It is easy to show that $\gF$ is $(1-c)$-inverse Lipschitz and $(1+c)$-Lipschitz. In both cases, the Lipschitz regularization is via spectral normalization \cite{miyato2018spectral}, which we observe to be very conservative (see Figure \ref{fig:step}). Alternatively, a bi-Lipschitz layer can be defined by an implicit equation \cite{lu2021implicit, ahn2022invertible}. However, these require an iterative solver for both the forward and inverse model inference. In contrast, our model has an explicit forward pass and iterative solution is only required for the inverse. 

\paragraph{IQC-based Lipschitz estimation and training.} In \cite{fazlyab2019efficient}, the IQC framework of \cite{megretski1997system} was first applied to obtain accurate Lipschitz bound estimation of deep  networks with slope-restricted activations. It was later pointed out by \cite{wang2022quantitative} that IQC for Lipschitzness \cite{fazlyab2019efficient} is Shor's relaxation of a ``Rayleigh quotient" quadratically constrained quadratic programming (QCQP). Direct (i.e. unconstrained) parameterizations based on IQC were were proposed in \cite{revay2020lipschitz} for deep equilibrium networks, in \cite{araujo2023unified} for residual networks, for deep MLPs and CNNs in \cite{wang2023direct}, and recurrent models in \cite{revay2023recurrent}. It was pointed out by \cite{havens2023exploiting} that many recent Lipschitz model parameterizations \cite{meunier2022dynamical,prach2022almost,araujo2023unified,wang2023direct} are special cases of \cite{revay2020lipschitz}. In a recent work \cite{pauli2024novel}, the IQC-based Lipschitz estimation was recently extended to more general activations such as GroupSort and MaxMin. All of these are for one-sided (upper) Lipschitzness, whereas our work applies the IQC framework for monotonicity and bi-Lipschitzness.

\paragraph{Bi-Lipschitz networks for learning-based surrogate optimization.} \cite{liang2023low} uses Bi-Lipschitz networks to learn a surrogate constraint set while our work focuses on surrogate loss learning. Both works take distortion bound as an important regularization technique. The difference is that the distortion estimation in \cite{liang2023low} is based on data samples while our work offers certified and smoothly-parameterized distortion bounds.

\section{Preliminaries}

We give some definitions for a mapping $\gF:\R^n\rightarrow\R^n$.
\begin{definition}\label{defn:mon}
    $\gF$ is said to be {\bf $\mu$-strongly monotone} with $\mu>0$ if for all $x,x'\in \R^n$ we have
\begin{equation*}
    \inprod{\gF(x)-\gF(x')}{x-x'}\geq \mu \|x-x'\|^2,
\end{equation*}
    where $\inprod{\cdot}{\cdot}$ is the Euclidean inner product: $\inprod{a}{b}=a^\top b$. $\gF$ is {\bf monotone} if the above condition holds for $\mu=0$.  
\end{definition}

\begin{definition}
    $\gF$ is said to be {\bf $\nu$-Lipschitz} with $\nu>0$ if 
    \[
        \|\gF(x)-\gF(x')\|\leq \nu \|x-x'\|,\quad \forall x,x'\in \R^n.
    \]
    $\gF$ is said to be {\bf $\mu$-inverse Lipschitz} with $\mu>0$ if
    \[
         \|\gF(x)-\gF(x')\|\geq \mu \|x-x'\|,\quad \forall x_1,x_2\in \R^n.
    \]
    $\gF$ is said to be {\bf bi-Lipschitz} with $\nu\geq \mu>0$, or simply $(\mu,\nu)$-Lipschitz, if it is $\mu$-inverse Lipschitz and $\nu$-Lipschitz. 
\end{definition}
For any $(\mu, \nu)$-Lipschitz mapping $\gF$, its inverse $\gF^{-1}$ is well-defined and $(1/\nu,1/\mu)$-Lipschitz \cite{yeh2006real}. By the Cauchy–Schwarz inequality, strong monotonicity implies inverse Lipschitzness, see \cref{fig:mon-bilip}. A notable difference between monotonicity and bi-Lipschitzness is their composition behaviour. Given two bi-Lipschitz mappings $\gF_1, \gF_2$, their composition $\gF=\gF_2\circ \gF_1$ is also bi-Lipschitz with bound of $(\mu_1\mu_2, \nu_1\nu_2)$ where $(\mu_1,\nu_1)$ and $(\mu_2, \nu_2)$ are the bi-Lipschitz bounds of $\gF_1$ and $\gF_2$, respectively. However, given two strongly monotone $\gF_1,\gF_2$ with monotonicity bounds $\mu_1,\mu_2$, the composition $\gF=\gF_2\circ \gF_1$ does \textit{not} need to be strongly monotone. However, it is still $\mu_1\mu_2$-inverse Lipschitz. To quantify the flexibility of bi-Lipschitz maps, we introduce the following:

\begin{definition}
    $\gF$ satisfies a {\bf distortion bound $\tau$} with $\tau\geq 1$ if $\gF$ is $(\mu, \nu)$-Lipschitz with $\tau=\nu/\mu$. 
\end{definition}
For an invertible affine mapping $\gF(x)=Px+q$, the condition number of $P$ is a distortion bound. 
An orthogonal mapping (i.e., $P^\top P=I$) has the smallest possible model distortion $\tau=1$. Distortion bounds satisfy a composition property, i.e., if $\gF_1,\gF_2$ have distortion bounds of $\tau_1, \tau_2$, then $\gF_2\circ\gF_1$ satisfies a distortion bound of $\tau_1\tau_2$. Both $\gF$ and $\gF^{-1}$ have the same distortion.

\begin{figure}[!tb]
    \centering
    \includegraphics[width=0.7\linewidth]{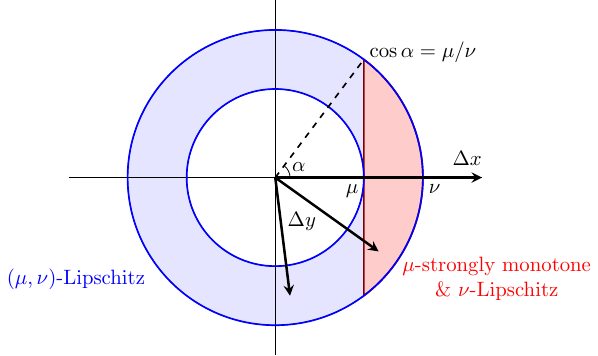}
        \caption{This figure depicts the possible ranges of $\Delta y=\gF(x')-\gF(x)$ on $\R^2$ for a given $\Delta x=x'-x $. The ring (blue area) is for $(\mu,\nu)$-Lipschitz $\gF$ while the half moon (red area) is for a $\mu$-strongly monotone and $\nu$-Lipschitz $\gF$. The largest angle between $\Delta x$ and $\Delta y$ satisfies $\cos\alpha=\tau^{-1}$ with $\tau=\nu/\mu$ as the distortion.}
    \label{fig:mon-bilip}
\end{figure}

\paragraph{Surrogate loss learning.} Let $\mathcal{D}$ be a dataset containing finite samples of $x_i\in \R^n$ and $y_i=\mathfrak{f}(x_i)\in \R$ where $\mathfrak{f}$ is an unknown loss function. The task is to learn a surrogate loss $\hat{f}$ from $\mathcal{D}$, i.e., $\hat{f}=\argmin_{f\in\mathfrak{F}}\,\mathbb{E}_{(x,y)\sim \mathcal{D}}\left[(f(x)-y)^2\right]$
where $\mathfrak{F}$ is the model set (e.g. neural networks). In many applications, it is highly desirable that each $f\in \mathfrak{F}$ has a unique and efficiently-computable global minimum. An important model class is the input convex neural network (ICNN) \cite{amos2017input}. Since $f\in\mathfrak{F}$ is convex w.r.t $x$, then any local minimum is a global minimum. Moreover, there exists a rich literature for convex optimization. Although convexity is more favourable for downstream optimization problems, it might be a very stringent requirement for fitting the dataset $\mathcal{D}$. In this work we aim to construct a model set $\mathfrak{F}$ such that every $f\in \mathfrak{F}$ does not need to be convex but still poses those favourable properties for optimization. In \cref{sec:PL}, we will show that the construction of such model set relies on bi-Lipschitz neural networks. 

\section{Monotone and bi-Lipschitz Networks}
In this section we first present the construction of $\mu$-strongly monotone and $\nu$-Lipschitz residual layers of the form $\gF(x)=\mu x+\gH(x)$. We then construct bi-Lipschitz networks by deep composition of the new monotone and Lipschitz layers with orthogonal linear layers.

\subsection{Feed-through network}

For the nonlinear block $\gH$, we introduce a network architecture, called \emph{feed-through network} (FTN), which takes an MLP as its backbone and then connects each hidden layer to input and output variables, see \cref{fig:ftn}. To be specific, the residual layer $\gF(x)=\mu x+ \gH(x)$ can be written as
\begin{equation}\label{eq:network}
    \begin{split}
        z_k&=\sigma(W_k z_{k-1}+ U_{k}x+b_k),\; z_0=0\\
        y &= \mu x + \sum_{k=1}^L Y_k z_k +b_y
    \end{split}
\end{equation}
where $z_k\in \R^{m_k}$ are the hidden variables, $U_k, W_k, Y_k$ and $b_k, b_y$ are the learnable weights and biases, respectively.  Throughout the paper we assume that the activation $\sigma$ is a scalar nonlinearity with slope restricted in $[0,1]$, which is satisfied (possibly with rescaling) by common activation functions such as ReLU, tanh, and sigmoid. 
\begin{remark}
    FTN contains both short paths $x\rightarrow z_i \rightarrow y$ preventing vanishing gradients and long paths $x\rightarrow z_i \rightarrow \cdots \rightarrow z_j \rightarrow y$ improving model expressivity (see \cref{fig:ftn}).
\end{remark}
\begin{figure}[!tb]
    \centering
    \includegraphics[width=\columnwidth]{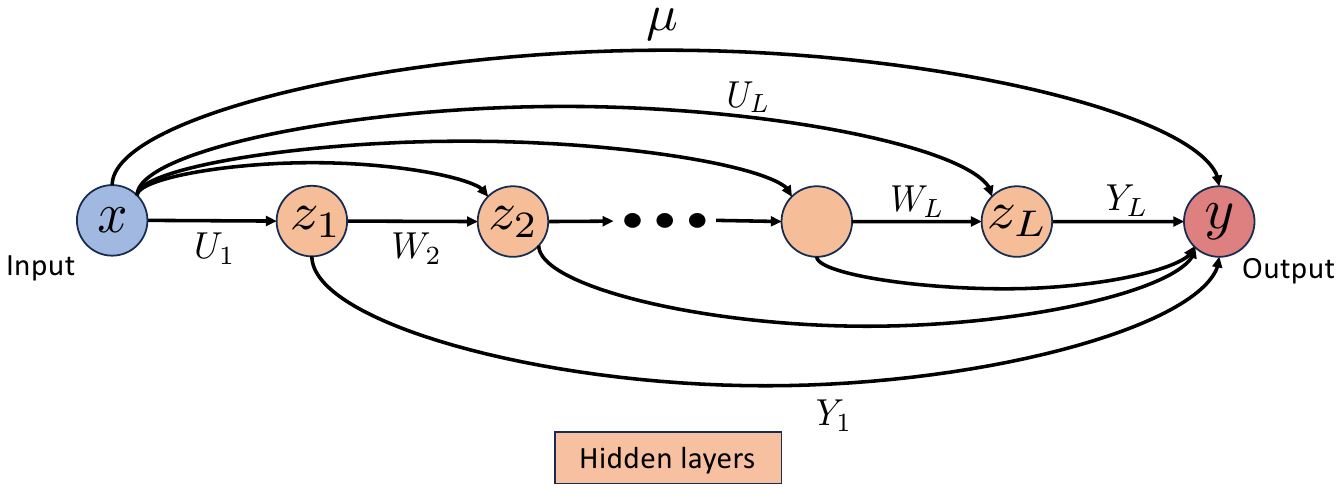}
    \caption{The proposed invertible residual network $\gF(x)=\mu x+\gH(x)$ where the nonlinear block $\gH$ is a feed-through network, whose hidden layers are directly connected to the input and output.}
    \label{fig:ftn}
\end{figure}

\subsection{SDP conditions for monotonicity and Lipschitzness}\label{sec:param-mln}   

The first step towards our parameterization is to establish strong monotonicity and Lipschitzness for $\gF$ via semidefinite programming (SDP) conditions. For this, we rewrite $\gF $ in a compact form:
\begin{equation}\label{eq:network-compact}
    z=\sigma(Wz+Ux+b),\quad y=\mu x+ Y z + b_y
\end{equation}
where $z=\bigl[\,z_1^\top\; \cdots\; z_L^\top\,\bigr]^\top,\,b=\bigl[\,b_1^\top\; \cdots\; b_L^\top\,\bigr]^\top $, and
\[
\begin{split}
    W&=\begin{bmatrix}
        0 & \\
        W_2 & 0 \\
        & \ddots & \ddots \\
        & & W_L & 0
    \end{bmatrix},\quad 
    U=\begin{bmatrix}
        U_1 \\  U_2 \\ \vdots \\ U_L
    \end{bmatrix}, \\
    Y&=\begin{bmatrix}
        Y_1 & Y_2 & \cdots & Y_L
    \end{bmatrix}.
\end{split}
\]

\begin{theorem}\label{thm:main}
    $\gF$ is $\mu$-strongly monotone and $\nu$-Lipschitz  if there exists a $\Lambda \in \sD_+^m$, where $\sD_+^m$ is the set of positive diagonal matrices, such that the following conditions hold: 
    \begin{equation} \label{eq:sdp}
        Y=U^\top \Lambda,\quad 2\Lambda-\Lambda W-W^\top \Lambda \succeq \frac{2}{\gamma} Y^\top Y  
    \end{equation}
    where $\gamma=\nu-\mu>0$.
\end{theorem}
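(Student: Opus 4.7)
My plan is to exploit the fact that the scalar activation $\sigma$ is slope-restricted to $[0,1]$, which yields an \emph{incremental quadratic constraint} (IQC) on the nonlinearity. Combined with the linear relations defining \eqref{eq:network-compact} and the SDP hypothesis \eqref{eq:sdp}, this implies both strong monotonicity and the Lipschitz bound through elementary inner-product algebra.

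First, for any $a,b\in\R$, the slope restriction gives $\sigma(a)-\sigma(b) = s(a-b)$ for some $s\in[0,1]$ (take $s=0$ if $a=b$), so $(\sigma(a)-\sigma(b))\bigl((a-b)-(\sigma(a)-\sigma(b))\bigr)=s(1-s)(a-b)^2 \geq 0$. Summing coordinate-wise with the positive-diagonal weights of $\Lambda$ gives, for any two inputs $x,x'$ and corresponding solutions $z,z'$ of the fixed-point equation in \eqref{eq:network-compact},
\[
\Delta z^\top \Lambda(\Delta v - \Delta z) \geq 0, \qquad \Delta v := W\Delta z + U\Delta x,
\]
where $\Delta z = z-z'$, $\Delta x = x-x'$. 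Rearranging and symmetrising the quadratic form in $\Delta z$ yields
\[
\Delta z^\top \Lambda U \Delta x \;\geq\; \tfrac{1}{2}\Delta z^\top M \Delta z,\qquad M := 2\Lambda - \Lambda W - W^\top \Lambda.
\]
Using the constraint $Y = U^\top \Lambda$, so that $Y^\top = \Lambda U$, the left-hand side equals $\inprod{Y\Delta z}{\Delta x}$. Applying the SDP condition $M \succeq \tfrac{2}{\gamma}Y^\top Y$ to the right-hand side then gives the key estimate
\[
\inprod{Y\Delta z}{\Delta x} \;\geq\; \tfrac{1}{\gamma}\|Y\Delta z\|^2.
\]

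Both conclusions now follow. Since $\Delta y = \mu\Delta x + Y\Delta z$, strong monotonicity is immediate:
\[
\inprod{\Delta y}{\Delta x} - \mu\|\Delta x\|^2 = \inprod{Y\Delta z}{\Delta x} \geq \tfrac{1}{\gamma}\|Y\Delta z\|^2 \geq 0.
\]
For the Lipschitz bound, I would combine the key estimate with Cauchy--Schwarz: $\tfrac{1}{\gamma}\|Y\Delta z\|^2 \leq \inprod{Y\Delta z}{\Delta x} \leq \|Y\Delta z\|\,\|\Delta x\|$, whence $\|Y\Delta z\| \leq \gamma\|\Delta x\|$. The triangle inequality then gives $\|\Delta y\| \leq \mu\|\Delta x\| + \|Y\Delta z\| \leq (\mu+\gamma)\|\Delta x\| = \nu\|\Delta x\|$.

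The main subtlety is recognising the ``right'' strength of the SDP condition: a bare $M \succeq 0$ would give monotonicity (with constant $\mu$) but no quantitative Lipschitz control, whereas the stronger $M \succeq \tfrac{2}{\gamma}Y^\top Y$ produces a quadratic lower bound on $\inprod{Y\Delta z}{\Delta x}$ in terms of $\|Y\Delta z\|^2$ itself, which is exactly what Cauchy--Schwarz needs to extract the bound $\|Y\Delta z\| \leq \gamma\|\Delta x\|$ and thus the upper Lipschitz constant $\nu = \mu + \gamma$. Once this structural observation is made, the remaining manipulations are routine.
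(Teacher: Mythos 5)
Your proof is correct and rests on the same incremental quadratic constraint as the paper's, but the packaging is cleaner: you extract the single estimate $\inprod{Y\Delta z}{\Delta x} \geq \tfrac{1}{\gamma}\|Y\Delta z\|^2$ and let it drive both conclusions, with monotonicity falling out immediately and the Lipschitz bound following from Cauchy--Schwarz plus the triangle inequality. The paper instead runs two separate calculations: for Lipschitzness it completes the square, showing $\gamma\|\Delta x\|^2 - \tfrac{1}{\gamma}\|Y\Delta z\|^2 - 2\inprod{\Delta v - \Delta z}{\Lambda\Delta z} \geq \bigl\|\sqrt\gamma\,\Delta x - \tfrac{1}{\sqrt\gamma}Y\Delta z\bigr\|^2 \geq 0$, and concludes from the non-negativity of the IQC term. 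Algebraically the two derivations are equivalent (the paper's perfect-square identity is the Cauchy--Schwarz inequality $2\inprod{a}{b} \leq \gamma\|a\|^2 + \tfrac{1}{\gamma}\|b\|^2$ in disguise), so you have rediscovered the same proof; your reorganization around a single intermediate inequality is a small but genuine simplification, at the cost of invoking Cauchy--Schwarz and the triangle inequality explicitly where the paper keeps everything inside one quadratic-form chain.
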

\begin{remark}
    The above conditions are obtained by applying the IQC theory \cite{megretski1997system} to 
    \eqref{eq:network-compact}. 
\end{remark}

\subsection{Model parameterization}\label{sec:model-param}

Let $\Theta$ be the set of all $\theta=\{U,W,Y,\Lambda \}$ such that Condition~\eqref{eq:sdp} holds. Since it is generally not scalable to train a model with SDP constraints, we instead construct a \textit{direct parameterization}, i.e. both unconstrained and complete:
\begin{definition}
A \textbf{direct parameterization}  of a constraint set $\Theta$ is a surjective differentiable mapping $\gM:\R^N\to \Theta$, i.e. for any $\phi \in \R^N$ we have $\gM(\phi)\in \Theta$, and the image of $\R^N$ maps onto $\Theta$, i.e. $\gM(\R^N)=\Theta$. 
 \end{definition}
A direct parameterization allows us to replace a constrained optimization over $\theta\in\Theta$ with an unconstrained optimization over $\phi\in\R^N$ without loss of generality. This enables use of standard first-order optimization algorithms such as SGD or ADAM \cite{kingma2014adam}.
 
We now construct a direct parameterization for FTNs satisfying \eqref{eq:sdp}. Here we present the main ideas, see \cref{sec:model-param-detail} for full details. First, we introduce the  free parameters
\[
    \phi=\{F^p, F^q\}\cup \{d_k, F_k^a, F_k^b\}_{1\leq k\leq L}
\]
where $F^p \in \R^{n \times n} $, $F^q\in \R^{m\times n}$, $d_k\in \R^{m_k}$, $F_k^a\in \R^{m_k\times m_k}$ and $F_k^b\in \R^{m_{k-1}\times m_k}$ with $m_0=0$. Then, we compute some intermediate variables $\Psi_k=\mathrm{diag}\bigl(e^{d_k}\bigl)$ and
\[
\begin{bmatrix}
    A_k^\top \\ B_k^\top 
\end{bmatrix}=\cayley\left(
\begin{bmatrix}
    F_k^a \\ F_k^b
\end{bmatrix}
\right),\;
\begin{bmatrix}
    P \\ Q
\end{bmatrix}=\cayley\left(
\begin{bmatrix}
    F^p \\ F^q
\end{bmatrix}
\right)
\]
where $\cayley:\R^{n\times p}\rightarrow\R^{n\times p}$ with $n\geq p$ is defined by 
\begin{equation}\label{eq:cayley}
    J=\cayley\left(\begin{bmatrix}
    G \\ H
\end{bmatrix}\right):=\begin{bmatrix}
    (I+Z)^{-1}(I-Z) \\ -2H(I+Z)^{-1}
\end{bmatrix}
\end{equation}
with $Z=G^\top  - G + H^\top H$. It can be verified that $J^\top J=I$ for any $G\in \R^{p\times p}$ and $H\in \R^{(n-p)\times p}$. Note that $P$ will not be used for further weight construction as its purpose is to ensure that $Q^\top Q \preceq I$. Next we set 
\[
V_k=2B_kA_{k-1}^\top,\quad S_k=A_kQ_k-B_kQ_{k-1}
\]
where $Q=\bigl[\,Q_1^\top\; \cdots\; Q_L^\top\,\bigr]^\top$ and $B_1=0$, $Q_0=0$. Finally, we construct the weights in \eqref{eq:network} as:
\begin{equation}\label{eq:direct-param}
    \begin{split}
        U_k&=\sqrt{2\gamma}\Psi_k^{-1}S_k,\; W_k=\Psi_k^{-1} V_k \Psi_{k-1},\\ 
        Y_k&=\sqrt{\frac{\gamma}{2}}S_k^\top \Psi_k,\; \Lambda_k=\frac{1}{2}\Psi_k^2.
    \end{split}
\end{equation}

\begin{proposition}\label{prop:direct-param}
    The model parameterization $\gM$ defined in \eqref{eq:direct-param} is a direct parameterization for the set $\Theta$, i.e. all models \eqref{eq:network} satisfying Condition~\eqref{eq:sdp}.
\end{proposition}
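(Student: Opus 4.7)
The plan is to verify the two defining properties of a direct parameterization separately: \emph{soundness}, that $\gM(\phi)\in\Theta$ for every $\phi\in\R^N$, and \emph{surjectivity}, that every $\theta\in\Theta$ is the image of some $\phi$.

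For soundness, the equality $Y=U^\top\Lambda$ in \eqref{eq:sdp} follows directly from \eqref{eq:direct-param}, since $U_k^\top\Lambda_k=\sqrt{2\gamma}\,S_k^\top\Psi_k^{-1}\cdot\tfrac{1}{2}\Psi_k^2=\sqrt{\gamma/2}\,S_k^\top\Psi_k=Y_k$. For the semidefinite inequality, I would assemble the block-diagonal $\Psi=\mathrm{blockdiag}(\Psi_1,\ldots,\Psi_L)$ and the block lower-bidiagonal matrix $\mathcal{A}$ with diagonal blocks $A_k$ and sub-diagonal blocks $-B_k$. The Cayley construction \eqref{eq:cayley} guarantees both $A_kA_k^\top+B_kB_k^\top=I$ (from the orthonormal columns of $[A_k^\top;B_k^\top]$) and $P^\top P+Q^\top Q=I$, hence $QQ^\top\preceq I$. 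Block-by-block computation using $V_k=2B_kA_{k-1}^\top$ shows $2\Lambda-\Lambda W-W^\top\Lambda=\Psi\,\mathcal{A}\mathcal{A}^\top\,\Psi$, while $S_k=A_kQ_k-B_kQ_{k-1}$ (i.e.\ $S=\mathcal{A}Q$) gives $\tfrac{2}{\gamma}Y^\top Y=\Psi\,\mathcal{A}QQ^\top\mathcal{A}^\top\,\Psi$. Subtracting yields
\[
2\Lambda-\Lambda W-W^\top\Lambda-\tfrac{2}{\gamma}Y^\top Y=\Psi\,\mathcal{A}(I-QQ^\top)\mathcal{A}^\top\,\Psi\;\succeq\;0,
\]
which is \eqref{eq:sdp}.

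For surjectivity, let $\theta=\{U,W,Y,\Lambda\}\in\Theta$. Set $\Psi_k:=\sqrt{2\Lambda_k}$ (well-defined and positive since $\Lambda\in\sD_+^m$), which fixes $d_k$. Define $V_k:=\Psi_kW_k\Psi_{k-1}^{-1}$ (with $V_1:=0$) and $S_k:=\Psi_kU_k/\sqrt{2\gamma}$; conjugating \eqref{eq:sdp} by $\Psi^{-1}$ reduces it to $I-\tilde{\mathcal{B}}-SS^\top\succeq 0$, where $\tilde{\mathcal{B}}$ is the block-tridiagonal matrix with zero diagonal and sub-diagonal blocks $V_k/2$. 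In particular $I-\tilde{\mathcal{B}}\succeq 0$, and a block-Cholesky-type factorisation $I-\tilde{\mathcal{B}}=\mathcal{A}\mathcal{A}^\top$ with $\mathcal{A}$ block lower-bidiagonal yields simultaneously diagonal blocks $A_k$ and sub-diagonal blocks $-B_k$, with the row-orthonormality $A_kA_k^\top+B_kB_k^\top=I$ and the structural constraint $B_kA_{k-1}^\top=V_k/2$ both holding automatically. Solving the recursion $A_kQ_k-B_kQ_{k-1}=S_k$ produces $Q$, and the residual inequality forces $Q^\top Q\preceq I$, so that $Q$ completes to a semi-orthogonal $[P;Q]$. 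The inverse Cayley map then recovers $F_k^a,F_k^b$ from $[A_k^\top;B_k^\top]$ and $F^p,F^q$ from $[P;Q]$.

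The main obstacle lies in the block-Cholesky step of the surjectivity argument: one must produce a factor $\mathcal{A}$ of the prescribed block lower-bidiagonal shape whose diagonal and sub-diagonal blocks jointly satisfy the row-orthonormality, and this requires care when $I-\tilde{\mathcal{B}}$ is only positive semidefinite (so some $A_k$ are singular and $Q_k$ must lie in the range of $A_k$). A secondary subtlety is that the Cayley transform \eqref{eq:cayley} misses the measure-zero stratum of semi-orthogonal matrices for which $I+Z$ is singular; the standard remedy is to pre-compose with a block sign-flip, which I would invoke to obtain exact surjectivity onto $\Theta$.
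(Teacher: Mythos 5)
Your argument follows the same route as the paper's: soundness by verifying $Y=U^\top\Lambda$ and writing $2\Lambda-\Lambda W-W^\top\Lambda-\tfrac{2}{\gamma}Y^\top Y=\Psi\,\mathcal{A}(I-QQ^\top)\mathcal{A}^\top\Psi\succeq 0$ (the paper writes this as $XX^\top\succeq XQQ^\top X^\top$ with $X=\Psi\mathcal{A}$); surjectivity by block bidiagonal factorisation of the banded PSD matrix $H$, recovery of $Q$ via Lemma~3 of Rantzer, and inversion of the Cayley map. The paper outsources these steps to citations, whereas you spell them out, so the proof strategy is identical. One note on your two stated obstacles: for (i), the recursion indeed stays consistent even with singular Schur complements, because $H_{kk}=2\Lambda_k\succ 0$ and PSD-ness of the leading principal minors force the right-hand sides of the bidiagonal solves into the correct column spaces, so this is a genuine but addressable technicality (the paper's citation to Davis~2006 is meant to cover it). For (ii), you are right that the Cayley transform \eqref{eq:cayley} produces $A$ with $I+A$ invertible and hence never hits the stratum where $A$ has a $-1$ eigenvalue; the paper does not mention this, so as literally defined $\gM$ is onto a Zariski-dense subset of $\Theta$ rather than all of $\Theta$. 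Your proposed sign-flip augmentation would repair this, but it is not part of the map $\gM$ as written in \eqref{eq:direct-param}, so there the proposal actually identifies a small imprecision in the paper's statement rather than a gap in your own argument.
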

This means that we can learn the free parameter $\phi$ using first-order methods without any loss of model expressivity.

The construction is now done, but we note that $\Psi_k$ is shared between layers $k$ and $k+1$. To have a modular implementation, we introduce new variables $\hat{z}=\Psi z$ and bias $\hat{b}=\Psi b$ with $\Psi=\diag(\Psi_1,\Psi_2,\ldots,\Psi_L)$. Then, \eqref{eq:network-compact} can be rewritten as follows (see \cref{sec:model-param-detail})
\begin{equation}\label{eq:new-form}   \hat{z}=\hat{\sigma}\bigl(V\hat{z}+\sqrt{2\gamma}S x+\hat b\bigr), \; y=\mu x+ \sqrt{\gamma/2}S^\top\hat{z}+b_y
\end{equation}
where $ \hat{\sigma}(x):=\Psi\sigma\left(\Psi^{-1} x\right)$ is a $(0,1)$-Lipschitz layer with learnable scaling $\Psi$, the weights $S,V$ can be written as
\begin{equation}\label{eq:SV}
    S=\begin{bmatrix}
    S_1 \\ S_2 \\ \vdots \\ S_L
\end{bmatrix}
,\quad V=\begin{bmatrix}
        0 & \\
        V_2 & 0 \\
        & \ddots & \ddots \\
        & & V_L & 0
    \end{bmatrix}.
\end{equation}
\paragraph{Number of free parameters.} Consider an $L$-layer FTN \eqref{eq:network} where each layer has the same width, i.e. $m_k=d$. The bi-Lipschitz network based on spectral normalization \cite{liu2020simple} has $2Ld^2$ free parameters while our model size is $(3L+1)d^2+Ld$.  Since the $Ld^2$ term dominates for deep and wide networks, our model has roughly 1.5 times as many parameter as the model from \cite{liu2020simple}. 

\subsection{Bi-Lipschitz networks}

We construct bi-Lipschitz networks (referred as BiLipNets) by composing strongly monotone and Lipschitz layers,
\begin{equation}\label{eq:bi-lip}
    \gG = \gO_{K+1}\circ \gF_{K}\circ \gO_{K}\circ \gF_{K-1}\circ \cdots \circ \gO_{2}\circ \gF_{1}\circ \gO_{1}
\end{equation}
where $\gO_k(x)=P_k x+q_k$ with $P_k^\top P_k=I$ is an orthogonal layer and $ \gF_k$ is a $\mu_k$-strongly monotone and $\nu_k$-Lipschitz layer \eqref{eq:new-form}. By the composition rule, the above BiLipNet is $(\mu,\nu)$-Lipschitz with $\mu=\prod_{k=1}^{L}\mu_k$ and $\nu=\prod_{k=1}^{L}\nu_k$. The orthogonal matrix $P$ can be parameterized via the Cayley transformation \eqref{eq:cayley} or Householder transformation \cite{singla2021improved}. Since the distortion of $\gO_k$ is 1, it can improve network expressivity without increasing model distortion. 

In some applications, e.g., normalising flows \cite{dinh2014nice,papamakarios2021normalizing}, we need to compute the inverse of $\gG$, which can be done in a backward manner:
\begin{equation}\label{eq:G-inv}
    \gG^{-1}(y)=\gO_1^{-1}\circ \gF_1^{-1}\circ \cdots \circ \gO_{K}^{-1}\circ \gF_K^{-1}\circ \gO_{K+1}^{-1}(y),
\end{equation}
where $\gO_k$ has an explicit inverse $\gO_k^{-1}(y)=P_k^\top (y-q_k)$.  Computing the inverse $\gF_k^{-1}(y)$ requires an iterative solver, which will be addressed in \cref{sec:inverse}.

\paragraph{Partially bi-Lipschitz networks.} A neural network $\tilde \gG:\R^n\times \R^l\rightarrow\R^n$ is said to be \emph{partially bi-Lipschitz} if for any fixed value of $p\in \R^l$, the mapping $y=\tilde \gG(x; p)$ is $(\mu,\nu)$-Lipschitz from $x$ to $y$. We can construct such mappings via $\tilde \gG(x;p)=\gG_{h(p)}(x)$ where $\gG_\phi$ is a$(\mu,\nu)$-Lipschitz network for any free parameter $\phi\in \R^N$ and $h:p\rightarrow \phi$ is a new learnable function. Since the dimension of $\phi$ is often very high, a practical approach is to make $\phi$ partially depend on $p$. For instance, we can learn $p$-dependent bias via an MLP while the weight matrices of $\gG_\phi$ is independent of $p$.

\section{Model inverse via operator splitting} \label{sec:inverse}

In this section we give an efficient algorithm to compute $\gF^{-1}(y)$ where $\gF$ is a $\mu$-strongly monotone and $\nu$-Lipschitz layer \eqref{eq:new-form}. First, we write its model inverse $\gF^{-1}$ as
\begin{equation}\label{eq:F-inverse}
    \begin{split}
        \hat z&=\hat \sigma\left(\left(V-\frac{\gamma}{\mu}SS^\top\right) \hat{z}+b_z\right) \\
        x&=\frac{1}{\mu}(y-b_y-\sqrt{\gamma/2}S^\top \hat z)
    \end{split}
\end{equation}
with $b_z=\sqrt{2\gamma}/\mu S(y-b_y)+\hat{b}$. Both $\gF$ and $\gF^{-1}$ can be treated as special cases of deep equilibrium networks \cite{bai2019deep,winston2020monotone,revay2020lipschitz} or implicit networks \cite{el2021implicit}. The difference is that $\gF$ has an explicit formula due to the strictly lower-triangular $V$ while $\gF^{-1}$ is an implicit equation as $SS^\top$ is a full matrix. A natural question for  \eqref{eq:F-inverse} is its \textbf{well-posedness}, i.e., for any $y\in \R^n$, does there exists a unique $\hat{z}\in \R^m$ satisfying \eqref{eq:F-inverse}?  
\begin{proposition}\label{prop:F-inverse}
    $\gF^{-1}$ is well-posed if $V,S$ are given by  \eqref{eq:SV}.
\end{proposition}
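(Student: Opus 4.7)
The plan is to reduce the well-posedness of \eqref{eq:F-inverse} to bijectivity of the forward map $\gF$, exploiting the explicit block structure built in \eqref{eq:new-form}--\eqref{eq:SV}. First, $\gF$ itself is a well-defined map $\R^n\to\R^n$: since $V$ in \eqref{eq:SV} is strictly block lower-triangular and $\hat\sigma$ acts componentwise, the forward equation $\hat z=\hat\sigma(V\hat z+\sqrt{2\gamma}Sx+\hat b)$ admits a unique solution for every $x\in\R^n$ by layer-by-layer substitution. By \cref{thm:main} applied through \cref{prop:direct-param}, the resulting $\gF$ is $\mu$-strongly monotone and $\nu$-Lipschitz, hence continuous. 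A classical Minty--Browder argument then gives that $\gF:\R^n\to\R^n$ is a bijection, so for every $y\in\R^n$ there is a unique $x^\star\in\R^n$ with $\gF(x^\star)=y$.

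Next, I would establish existence for \eqref{eq:F-inverse}. Let $\hat z^\star$ be the hidden vector produced by the (unique) forward pass on input $x^\star$. The output equation in \eqref{eq:new-form} gives $y=\mu x^\star+\sqrt{\gamma/2}\,S^\top\hat z^\star+b_y$, so $x^\star=(y-b_y-\sqrt{\gamma/2}\,S^\top\hat z^\star)/\mu$. Substituting this expression into $\sqrt{2\gamma}\,Sx^\star+\hat b$ produces exactly $-(\gamma/\mu)SS^\top\hat z^\star+b_z$, turning the forward fixed-point equation into $\hat z^\star=\hat\sigma((V-(\gamma/\mu)SS^\top)\hat z^\star+b_z)$. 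Hence $\hat z^\star$ solves \eqref{eq:F-inverse}.

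For uniqueness, suppose $\hat z$ is any solution of \eqref{eq:F-inverse} and define $x:=(y-b_y-\sqrt{\gamma/2}\,S^\top\hat z)/\mu$. Reversing the substitution above shows $(x,\hat z)$ satisfies both equations in \eqref{eq:new-form}, so $y=\gF(x)$. Injectivity of $\gF$ forces $x=x^\star$, and the triangular structure of $V$ then forces $\hat z=\hat z^\star$.

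The only non-algebraic step is the bijectivity of $\gF$, and this is where I would expect pushback: the cheap route is to cite Minty--Browder, but a more self-contained alternative is to use the IQC certificate $\Lambda$ from \cref{thm:main} to show directly that the residual map $\hat z\mapsto \hat z-\hat\sigma((V-(\gamma/\mu)SS^\top)\hat z+b_z)$ is strongly monotone in the $\Lambda$-weighted inner product, which yields existence and uniqueness without invoking external theorems and additionally aligns with the three-operator splitting algorithm used for numerical inversion in \cref{sec:inverse}. All remaining steps are routine algebra combined with the triangularity of $V$.
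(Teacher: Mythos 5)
Your proof is correct, but it takes a genuinely different route from the paper. The paper's argument is short and algebraic: it uses \cref{lemma:VS} to show that the effective weight $\tilde V := V-\tfrac{\gamma}{\mu}SS^\top$ of the implicit equation \eqref{eq:F-inverse} satisfies $2I-\tilde V-\tilde V^\top\succeq 0$, and then invokes Theorem~1 of \cite{revay2020lipschitz}, which certifies well-posedness of equilibrium networks $\hat z=\hat\sigma(\tilde V\hat z+c)$ under exactly such an LMI with slope-restricted $\hat\sigma$. You instead sidestep the implicit-network machinery entirely: you observe that the forward map $\gF$ is explicitly well-defined by forward substitution (strict lower-triangularity of $V$), use \cref{thm:main} to conclude $\gF$ is $\mu$-strongly monotone and $\nu$-Lipschitz, cite Minty--Browder for bijectivity, and then show by a direct substitution that $\hat z$ solves \eqref{eq:F-inverse} with $x:=\tfrac{1}{\mu}(y-b_y-\sqrt{\gamma/2}\,S^\top\hat z)$ if and only if $(x,\hat z)$ solves the forward system, so existence and uniqueness transfer from $\gF$'s bijectivity plus triangularity of $V$. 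What your approach buys: it is self-contained modulo a single classical fact about strongly monotone continuous maps on $\R^n$, and it makes transparent that well-posedness of \eqref{eq:F-inverse} is literally equivalent to invertibility of $\gF$. What the paper's approach buys: it works directly with the equilibrium equation and the same structural inequality ($2I-V-V^\top\succeq 0$) that is reused to bound the inverse-monotonicity of $\gC$ and hence the DYS step size, so the well-posedness proof and the solver analysis share one lemma. Your anticipated alternative (showing the residual map is strongly monotone in the $\Lambda$-weighted inner product) would in fact be a close relative of the paper's argument, since the condition $2\Lambda-\Lambda\tilde V-\tilde V^\top\Lambda\succeq 0$ is what makes the residual map monotone; here the paper simply takes $\Lambda=I$.
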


Certain classes of equilibrium networks were solved via two-operator splitting problems \cite{winston2020monotone,revay2020lipschitz}. We follow a similar strategy, but our structure admits a three-operator splitting, see \cref{prop:three-op} with background in \cref{sec:operator-split}. To state the result, we first recall the following fact from \cite{li2019lifted}. For the monotone and 1-Lipschitz activation $\hat{\sigma}$, there exists a proper convex function $f:\R^n\rightarrow\R$ satisfying $\hat \sigma(\cdot)=\prox_{f}^1(\cdot)$ with 
\[
\prox_{f}^\alpha(x)=\arg\min_{z\in\R^n}\; \frac{1}{2}\|x-z\|^2+\alpha f(z).
\]
A list of $f$ for popular activations is given in \cref{sec:mon-operator}. 
\begin{proposition}\label{prop:three-op}
    Finding a solution $\hat z\in \R^m$ to \eqref{eq:F-inverse} is equivalent to finding a zero to the three-operator splitting problem $0\in \gA(z)+\gB(z)+\gC(z)$ where $\gA,\gB,\gC$ are monotone operators defined by 
    \begin{equation*}
        \begin{split}
            \gA(z)=(I-V)z-b_z, \;
            \gB(z)=\partial f(z), \;
            \gC(z)= \frac{\gamma}{\mu} SS^\top z
        \end{split}
    \end{equation*}
    where $f$ satisfies $\hat{\sigma}(\cdot)=\prox_f^1(\cdot)$.
\end{proposition}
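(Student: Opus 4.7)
The plan is to translate the fixed-point equation in \eqref{eq:F-inverse} into an inclusion via the proximal characterization of $\hat\sigma$, then verify that each of the three operators is monotone. The payoff is that $\gA+\gB+\gC$ becomes exactly a Davis--Yin-style three-operator splitting.

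First, I would invoke the identity $\hat\sigma(\cdot)=\prox_f^1(\cdot)$ stated just before the proposition and recall the standard proximal characterization: for a proper convex lower-semicontinuous $f$, $u=\prox_f^1(v)$ if and only if $v-u\in\partial f(u)$, i.e.\ $0\in u-v+\partial f(u)$. Setting $u=\hat z$ and $v=\bigl(V-\tfrac{\gamma}{\mu}SS^\top\bigr)\hat z+b_z$, the inner fixed-point equation in \eqref{eq:F-inverse} becomes
\[
0\in\hat z-\Bigl(V-\tfrac{\gamma}{\mu}SS^\top\Bigr)\hat z-b_z+\partial f(\hat z)
=\underbrace{(I-V)\hat z-b_z}_{\gA(\hat z)}+\underbrace{\partial f(\hat z)}_{\gB(\hat z)}+\underbrace{\tfrac{\gamma}{\mu}SS^\top\hat z}_{\gC(\hat z)}.
\]
Once $\hat z$ solves this inclusion, the corresponding $x$ is recovered explicitly from the second line of \eqref{eq:F-inverse}, so the two problems are in one-to-one correspondence.

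Next, I would certify monotonicity of each piece. For $\gB$, this is immediate since $\partial f$ is maximal monotone as the subdifferential of a proper convex function. For $\gC$, the operator is linear with symmetric matrix $\tfrac{\gamma}{\mu}SS^\top\succeq 0$, so it is monotone (and in fact cocoercive, which is what the three-operator splitter will exploit later).

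The main obstacle, and where the structure from \eqref{eq:SV}--\eqref{eq:direct-param} is actually used, is the monotonicity of the affine operator $\gA$. This reduces to showing $2I-V-V^\top\succeq 0$, i.e.\ $x^\top Vx\le\|x\|^2$ for all $x$. Writing $x=[x_1;\dots;x_L]$ to match the block structure of \eqref{eq:SV} and substituting $V_k=2B_kA_{k-1}^\top$, I would expand
\[
x^\top Vx=\sum_{k=2}^L 2\,(B_k^\top x_k)^\top(A_{k-1}^\top x_{k-1})
\]
and apply Young's inequality $2a^\top b\le\|a\|^2+\|b\|^2$ termwise to get
\[
x^\top Vx\le\sum_{k=1}^L\bigl(\|A_k^\top x_k\|^2+\|B_k^\top x_k\|^2\bigr)=\sum_{k=1}^L x_k^\top(A_kA_k^\top+B_kB_k^\top)x_k.
\]
The key fact is that $[A_k^\top;B_k^\top]$ is the output of the Cayley map \eqref{eq:cayley}, hence has orthonormal columns, giving $A_kA_k^\top+B_kB_k^\top=I$; the bound collapses to $\|x\|^2$, establishing $\gA$ is monotone. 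This is the only nonroutine step, and it is precisely the place where the Cayley parameterization earns its keep. With all three monotonicities in hand, the statement of the proposition is established.
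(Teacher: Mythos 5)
Your proof is correct and follows essentially the same route as the paper's: convert the fixed point of \eqref{eq:F-inverse} into the inclusion $0\in\gA(\hat z)+\gB(\hat z)+\gC(\hat z)$ via the proximal characterization $u=\prox_f^1(v)\iff v-u\in\partial f(u)$ (the paper packages the same step as a forward--backward fixed point with $\alpha=1$), and then verify monotonicity of each piece, the only nontrivial one being $\gA$, which reduces to $2I-V-V^\top\succeq 0$. The only small divergence is in how you certify that matrix inequality: you apply Young's inequality termwise to the off-diagonal blocks of $V$, whereas the paper cites Lemma~\ref{lemma:VS} and proves it by sequential Schur complements; both arguments hinge on $A_kA_k^\top+B_kB_k^\top=I$ from the Cayley parameterization and are equally valid.
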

For three-operator problems, the Davis-Yin splitting algorithm (\textbf{DYS}) \cite{davis2017three} can be applied, obtaining the following fixed-point iteration:
\begin{equation}\label{eq:davis-yin}
    \begin{split}
        z^{k+1/2}&=\prox_f^\alpha (u^k) \\
        u^{k+1/2}&=2z^{k+1/2}-u^k \\ 
        z^{k+1}&= R_{\gA}(u^{k+1/2}-\alpha \gC(z^{k+1/2}))\\
        u^{k+1}&=u^k+z^{k+1}-z^{k+1/2}
    \end{split}
\end{equation}
where $R_{\gA}(v)=((1+\alpha)I-\alpha V)^{-1}(v+\alpha b_z)$. Since $V$ is strictly lower triangular, we can solve $ R_{\gA}(v)$ using forward substitution. Furthermore, we can show that \eqref{eq:davis-yin} is guaranteed to converge with $\alpha \in \bigl(0, \frac{1}{\tau -1}\bigr)$, where $\tau$ is the model distortion. 

\section{Polyak-\L{}ojasiewicz Networks}\label{sec:PL}
We call a network $f:\R^n\rightarrow\R$ a \emph{Polyak-\L{}ojasiewicz (PL) network}, or PLNet for short, if it satisfies the following PL condition \citep{polyak1963gradient,lojasiewicz1963topological}:
\begin{equation}
    \frac{1}{2}\|\nabla_x f(x)\|^2\geq m (f(x)-\min_x f(x)),\, \forall x\in \R^n,
\end{equation}
where $m>0$. The PL condition is significant in optimization since it is weaker than convexity, but still implies that gradient methods converge to a global minimum with a linear rate \citep{karimi2016linear}, making PLNet a promising candidate for learning a surrogate loss models.   
 
\begin{proposition}\label{prop:PL}
    If $\gG$ is $\mu$-inverse Lipschitz, then 
    \begin{equation}\label{eq:PL-f}
    f (x)=\frac{1}{2}\|\gG(x)\|^2+c,\quad c\in \R
    \end{equation}
    is a PLNet with $m=\mu^2$. 
\end{proposition}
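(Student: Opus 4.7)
The plan is to verify the PL inequality directly by (i) identifying $\min_x f(x)$ with $c$, (ii) expressing $\nabla f$ via the chain rule, and (iii) invoking the inverse-Lipschitz bound at the infinitesimal level to lower-bound the gradient norm.

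First I would establish that the minimum is attained with value $c$. The bound $\|\gG(x)-\gG(x')\|\geq \mu\|x-x'\|$ gives the coercivity $\|\gG(x)\|\geq \mu\|x\|-\|\gG(0)\|\to\infty$ as $\|x\|\to\infty$, together with injectivity of $\gG$. Combined with continuity (implicit from the FTN/BiLipNet construction) and invariance of domain in $\R^n$, this forces $\gG(\R^n)=\R^n$, so that $x^\star:=\gG^{-1}(0)$ exists and $f(x^\star)=c$. Since $\tfrac12\|\gG(x)\|^2\geq 0$ everywhere, we obtain $\min_x f(x)=c$.

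Next I would differentiate. At any point where $\gG$ is smooth, the chain rule gives $\nabla f(x)=J_{\gG}(x)^\top \gG(x)$, where $J_{\gG}(x)$ is the Jacobian. The key estimate is
$$\|J_{\gG}(x)^\top u\|\geq \mu \|u\|\qquad \forall u\in\R^n.$$
To see this, pick a unit vector $v$, apply inverse-Lipschitzness to $x+tv$ and $x$, divide by $|t|$, and let $t\to 0$ to obtain $\|J_{\gG}(x)v\|\geq \mu$. Hence every singular value of $J_{\gG}(x)$ is at least $\mu$; since $J_{\gG}(x)$ and $J_{\gG}(x)^\top$ share singular values, the displayed inequality follows. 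Specializing $u=\gG(x)$ yields
$$\|\nabla f(x)\|^2=\|J_{\gG}(x)^\top \gG(x)\|^2\geq \mu^2\|\gG(x)\|^2=2\mu^2\bigl(f(x)-c\bigr),$$
and dividing by $2$ gives the PL condition with $m=\mu^2$.

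The main obstacle I anticipate is non-smoothness of activations such as ReLU: $\gG$ is then only a.e.\ differentiable. This is handled via Rademacher's theorem together with the Clarke generalized Jacobian---every element of $\partial\gG(x)$ inherits the singular-value lower bound via the same limiting argument applied along directional derivatives---so the gradient bound holds for every Clarke subgradient of $f$, which is the version of PL needed for convergence of (sub)gradient methods. If one restricts to smooth activations such as tanh or sigmoid, the classical chain-rule argument above suffices without modification.
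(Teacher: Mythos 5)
Your proof is correct and follows the same chain-rule argument as the paper: write $\nabla f(x)=J_\gG(x)^\top\gG(x)$, lower-bound the singular values of the Jacobian by $\mu$ via the inverse-Lipschitz condition, and conclude $\tfrac12\|\nabla f\|^2\geq \mu^2\bigl(f(x)-f^\star\bigr)$. You simply spell out two points the paper leaves implicit---that $\min_x f(x)=c$ follows from surjectivity of a continuous inverse-Lipschitz self-map of $\R^n$, and how to interpret the Jacobian bound at nondifferentiable points of piecewise-smooth activations---but the core mechanism is identical.
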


\begin{remark}
    We can further relax the quadratic assumption: $f(x)=h\bigl(\gG(x)\bigr)$ is a PLNet if $h:\R^n\rightarrow \R$ is strongly convex \cite{karimi2016linear}. 
\end{remark}
\begin{remark}\label{rem:partialPL}
    For parametric optimization problem, one can learn a surrogate loss via $f(x;p)=1/2\|\tilde \gG(x;p)\|^2+c$
    where $p\in \R^m$ is the problem-specific parameter and $\tilde \gG$ is a partially bi-Lipschitz network.
\end{remark}
\begin{remark}
    Any sub-level set $\sL_\alpha=\{x: f(x)<\alpha\}$ with $\alpha>c$ is homeomorphic to a unit ball, making PLNets suitable for neural Lyapunov functions \cite{wilson1967structure}. Applications of PLNets to learning Lyapunov stable neural dynamics can be found in \cite{cheng2024learning}.
\end{remark}

\paragraph{Computing global optimum of a PLNet.} If $f$ takes the form \eqref{eq:PL-f} and $\gG$ is bi-Lipschitz network \eqref{eq:bi-lip}, then $f$ has a unique global optimum $x^\star=\gG^{-1}(0)$ with $\gG^{-1}$ given by \eqref{eq:G-inv}. This can be efficiently computed by analytical inversion of orthogonal layers and applying the DYS algorithm \eqref{eq:davis-yin} to monotone and Lipschitz layers. 

\paragraph{Limitations of gradient descent for finding global optimum.} 
An alternative way to compute the global optimum $x^\star$ is the standard gradient descent (GD) method $x^{k+1}=x^k-\alpha \nabla_x f(x^k)$. If $\nabla_x f$ is $L$-Lipschitz, then the above GD solver with $\alpha=1/L$ has a linear global convergence rate of $1-m/L$ with $m=\mu^2$ \citep{karimi2016linear}. However, this method has two drawbacks. First, the gradient function $\nabla_x f$ may not be globally Lipschitz, see \cref{exam:grad-lip}. Secondly, even if a global Lipschitz bound exists, it is generally hard to estimate.  
\begin{example}\label{exam:grad-lip}
    Consider a scalar function $f(x)=0.5g^2(x)$ with $g(x)=2x+\sin x$, which satisfies the PL condition. Note that $\partial f/\partial x=(2+\cos x)(2x+\sin x)$ is not globally Lipschitz due to the term $2x\cos x$. 
\end{example}

\section{Experiments}

Here we present experiments which explore the expressive quality of the proposed models, regularisation via model distortion, and performance of the DYS solution method. Code is available at \url{https://github.com/acfr/PLNet}.

\subsection{Uncertainty quantification via neural Gaussian process} 
It was shown in \cite{liu2020simple} that accurate uncertainty quantification of neural network models depends on a model's ability to quantify the distance of a test example from the training data. This \textit{distance-awareness} can be achieved with bi-Lipschitz residual layers $\gF(x)=x+\gH(x)$ and a Gaussian process output layer. In \cite{liu2020simple} this is achieved by imposing Lipschitz bound of $0<c<1$ for $\gH$ via spectral normalization. The resulting model is called Spectral-normalized Neural Gaussian Process (SNGP). In this section we examine the benefits of using the proposed BiLipNet in place of spectrally-normalized layers.

\paragraph{Toy example.} Using the two-moon dataset, we compare our $(\mu,\nu)$-Lipschitz network to an SNGP using a 3-layer i-ResNet under the same bi-Lipschitz constraints, i.e., $\mu=(1-c)^3$ and $\nu=(1+c)^3$, see \cref{fig:sngp}. For the lower-distortion case (i.e., small $c=0.1$), SNGP fails to completely separate the train and out-of-distribution (OOD) data due to its loose Lipschitz bound. Our model can distinguish the OOD examples from training dataset and the uncertainty surface is close to the SNGP with much higher distortion ($c=0.9$). As the model distortion increases, our model can have an uncertainty surface very close the dataset. The uncertainty surface of SNGP does not change much from $c=0.1$ to $c=0.9$, see Figure \ref{fig:sngp} and additional results in \cref{sec:extra-result}.

\begin{figure}[!tb]
    \centering
    \includegraphics[width=0.99\linewidth]{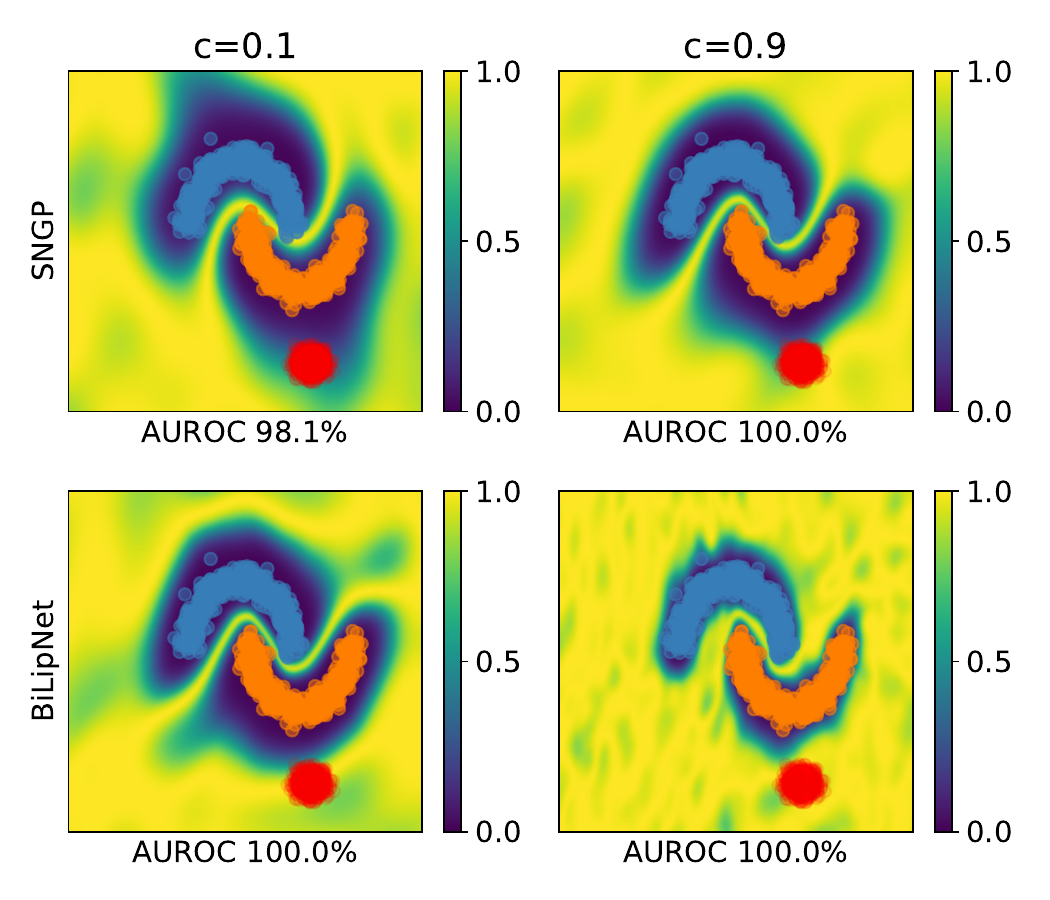}
    \caption{Predictive uncertainty of different NGPs with the same bi-Lipschitz bound. The points from dark blue and regions are classified as in-domain distribution and OOD data, respectively. Light blue and orange points (different colors indicate different labels) are training samples from the two-moon dataset. The red points are ODD test examples. For the case with small distortion, our model can still distinguish the train and OOD data, achieving similar results of SNGP with large distortion. }\label{fig:sngp}
\end{figure}

\paragraph{CIFAR-10/100.} For image datasets, the SNGP model in \cite{liu2020simple} contains three bi-Lipschitz components, each with four residual layers of the form $x+\gH(x)$ where $\gH$ is constructed to be $c$-Lipschitz using spectral normalization. To ensure certifiable bi-Lipschitzness, we modify the SNGP model by choosing $c\in (0,1)$ and removing batch normalization from $\gH$ since it may re-scale a layer’s spectral norm in unexpected ways \cite{liu2023simple}. The results of SNGP with batch normalization can be found in \cref{sec:extra-result}. Our BiLipNet model has a similar architecture as SNGP except replacing the bi-Lipschitz components with the proposed $(\mu,\nu)$-Lipschitz network \eqref{eq:bi-lip}. To ensure both models have the same bi-Lipschitz bound, we choose $\mu=(1-c)^4$ and $\nu=(1+c)^4$. 

\cref{tab:cifar10-100} reports the results of SNGP and BiLipNet under different bounds $c=0.95, 0.65, 0.35$. For CIFAR-10 dataset, our model uniformly outperforms SNGP on both clean and corrupted data, i.e., it achieves higher accuracy (about $10\sim 20\%$ improvement), lower expected calibration error (ECE) and negative log liklihood (NLL). Similar conclusion also holds for CIFAR-100 on accuracy and NLL, though our model has sightly higher ECE. 

As with the previous toy example, our model with small distortion ($\tau=18.6$ for $c=0.35$) achieves better accuracy than SNGP with large distortion ($\tau=2.3\times 10^{6}$ for $c=0.95$). Thus, we observe that our parameterization is much more expressive for a given distortion bound.

\begin{table*}[ht]
\centering
\resizebox{0.84\textwidth}{!}{  
\begin{tabular}{c|c|cc|cc|cc}
\toprule
& & \multicolumn{2}{c|}{Accuracy ($\uparrow$)} & 
\multicolumn{2}{c|}{ECE ($\downarrow$)} &
\multicolumn{2}{c}{NLL ($\downarrow$)} 
\\
Method & $c$  & Clean & Corrupted & Clean & Corrupted & Clean & Corrupted
\\
\midrule \midrule
\multicolumn{8}{c}{\textbf{CIFAR-10}} 
\\
\midrule
\multirow{3}{*}{SNGP}  & 0.95 & 76.7 $\pm$ 0.629 & 58.7 $\pm$ 1.000 & 0.057 $\pm$ 0.007 & 0.079 $\pm$ 0.006 & 0.682 $\pm$ 0.015 & 1.199 $\pm$ 0.041 \\
& 0.65 & 72.5 $\pm$ 1.500 & 54.7 $\pm$ 1.778 & 0.058 $\pm$ 0.006 & 0.078 $\pm$ 0.006 & 0.797 $\pm$ 0.046 & 1.303 $\pm$ 0.057 \\
& 0.35 & 62.7 $\pm$ 0.334 & 52.3 $\pm$ 0.721 & 0.069 $\pm$ 0.010 & 0.065 $\pm$ 0.006 & 1.055 $\pm$ 0.010 & 1.356 $\pm$ 0.018 \\
\midrule
\multirow{3}{*}{BiLipNet}  & 0.95 & 86.2 $\pm$ 0.250 & 70.8 $\pm$ 0.469 & 0.020 $\pm$ 0.003 & 0.052 $\pm$ 0.005 & 0.423 $\pm$ 0.006 & 0.895 $\pm$ 0.020 \\
& 0.65 & 86.7 $\pm$ 0.129 & 72.8 $\pm$ 0.592 & 0.015 $\pm$ 0.005 & 0.047 $\pm$ 0.009 & 0.400 $\pm$ 0.006 & 0.830 $\pm$ 0.024 \\
& 0.35 & 84.5 $\pm$ 0.184 & 72.6 $\pm$ 0.216 & 0.010 $\pm$ 0.002 & 0.052 $\pm$ 0.004 & 0.457 $\pm$ 0.002 & 0.827 $\pm$ 0.008 \\
\midrule \midrule
\multicolumn{8}{c}{\textbf{CIFAR-100}} 
\\
\midrule
\multirow{3}{*}{SNGP} & 0.95 & 36.9 $\pm$ 1.656 & 25.5 $\pm$ 1.406 & 0.131 $\pm$ 0.010 & 0.068 $\pm$ 0.005 & 2.493 $\pm$ 0.068 & 3.073 $\pm$ 0.069 \\
& 0.65 & 33.0 $\pm$ 0.481 & 24.3 $\pm$ 0.749 & 0.117 $\pm$ 0.006 & 0.068 $\pm$ 0.003 & 2.683 $\pm$ 0.015 & 3.140 $\pm$ 0.048 \\
& 0.35 & 26.5 $\pm$ 1.630 & 19.3 $\pm$ 1.296 & 0.101 $\pm$ 0.016 & 0.056 $\pm$ 0.010 & 3.020 $\pm$ 0.062 & 3.406 $\pm$ 0.073 \\
\midrule
\multirow{3}{*}{BiLipNet} & 0.95 & 51.0 $\pm$ 0.480 & 35.8 $\pm$ 0.397 & 0.230 $\pm$ 0.006 & 0.137 $\pm$ 0.007 & 2.064 $\pm$ 0.024 & 2.718 $\pm$ 0.014 \\
& 0.65 & 55.2 $\pm$ 0.426 & 39.2 $\pm$ 0.495 & 0.225 $\pm$ 0.004 & 0.137 $\pm$ 0.005 & 1.887 $\pm$ 0.021 & 2.576 $\pm$ 0.022 \\
& 0.35 & 54.4 $\pm$ 0.438 & 41.1 $\pm$ 0.200 & 0.194 $\pm$ 0.008 & 0.126 $\pm$ 0.009 & 1.876 $\pm$ 0.031 & 2.447 $\pm$ 0.016 \\
\bottomrule
\end{tabular}
}
\caption{ 
Results for SNGP and BiLipNet on CIFAR-10/100, averaged over 5 seeds. To ensure bi-Lipschitz bounds, batch normalization is removed from SNGP. BiLipNet uniformly significantly outperforms SNGP in term of accuracy on both clean and corrupted data. 
}
\label{tab:cifar10-100}
\end{table*}

\subsection{Surrogate loss learning} 
We explore the PLNet's performance with the Rosenbrock function $r(x,y)=1/200(x-1)^2+0.5(y-x^2)^2$ and its higher-dimensional generalizations. The  Rosenbrock function is a classical test problem for optimization, since it is non-convex but a unique global minimum point $(1,1)$, at which the Hessian is poorly-conditioned. We also consider the sum of the Rosenbrock function and a 2D sine wave function, which still has a unique global minimum at $(1,1)$ while having many local minima, see \cref{sec:train-detail}. 

We learned  models of the form \eqref{eq:PL-f} where $\gG$ is parameterized by MLP, i-ResNet \cite{behrmann2019invertible}, i-DenseNet \cite{perugachi2021invertible} and the proposed BiLipNet \eqref{eq:bi-lip}. We also trained the ICNN, a scalar-output model which is convex w.r.t. inputs \cite{amos2017input}. 

From \cref{fig:rosenbrock}, we have the following observations. The unconstrained MLP can achieve small test errors. However, it has many local minima near the valley $y=x^2$. This phenomena is more easily visible for the Rosenbrock+Sine case but also occurs in the plain Rosenbrock case. The ICNN model has a unique global minimum but the fitting error is large as its sub-level sets are convex. For i-DenseNet, the sub-level sets become mildly non-convex but their bi-Lipschitz bound is quite conservative, so they do not capture the overall shape. In contrast, our proposed BiLipNet is more flexible and captures the non-convex shape while maintaining a unique global minimum. We note that in the Rosenbrock+Sine case, the BiLipNet surrogate has errors of similar magnitude to the MLP, but remains ``easily optimizable'', i.e. it satisfies the PL condition and has a unique global minimum. Additional results are in \cref{sec:extra-result}.

\paragraph{Partial PLNet.} We also fit a parameterized Rosenbrock function $r(x,y; p)$ using partial PLNet with $p$-dependent biases (see Remark \ref{rem:partialPL}). The results in \cref{fig:rosenbrock2} indicate that the approach can be effective even if only bias terms are modified by the external parameter $p$, and not weights.

\paragraph{High-dimensional case.} We now turn to scalability of the approach to higher-dimensional problems and analyse convergence of the DYS method for computing the global minimum. We apply the approach to a $N$=20-dimensional version of the Rosenbrock function:
\begin{equation}\label{eq:multiRosen}
    R(x) = \frac{1}{N-1}\sum_{i=1}^{N-1}r(x_i, x_{i+1})
\end{equation}
which has a global minimum of zero at $x = (1,1, ... , 1)$ but is non-convex and has spurious local minima \cite{Kok09}. We sample 10K training points uniformly over $[-2,2]^{20}$. Note that, in contrast to the 2D example above, this is very sparse sampling of 20-dimensional space. 

A comparison of train and test error vs model distortion is shown in \cref{fig:rb20}. It can be seen that our proposed BiLipNet model achieves far better fits than iResNet \cite{behrmann2021understanding} and iDenseNet \cite{perugachi2021invertible}, which can not achieve small training error for any value of the distortion parameter. Furthermore, for our network, the distortion parameter appears to act as an effective regularizer. Note that the best test error occurs after training error drops to near zero ($\sim 10^{-8}$) but distortion is still relatively small. 

\begin{figure}[!tb]
    \centering
    \begin{tabular}{c}
        \includegraphics[width=\linewidth]{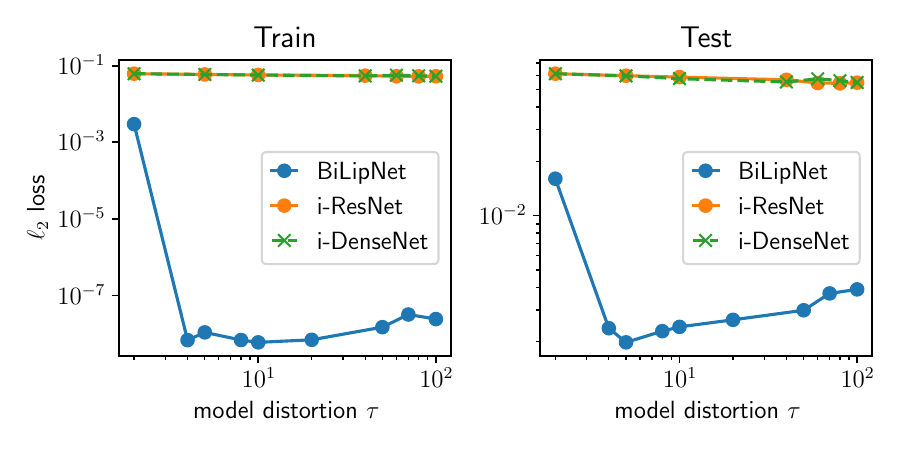} 
    \end{tabular}
    \caption{Surrogate loss learning for 20-dimensional Rosenbrock function. Comparison of training and test error vs model distortion for PLNet with  different bi-Lipschitz models.}
    \label{fig:rb20}
\end{figure}

\paragraph{Solver comparison.}
Given the surrogate loss function learned by BiLipNet, we now compare methods to compute the location of its global minimum. In \cref{fig:solver} we compare the proposed DYS solver to the forward step method (\textbf{FSM}), see, e.g., \cite{ryu2016primer}. Specifically, the inverse $x=\gF^{-1}(y)$ with $\gF$ as a $\mu$-strongly monotone and $\nu$-Lipschitz layer can be computed via 
\begin{equation}\label{eq:fwd}
    x^{k+1}=x^k-\alpha (\gF(x^k)-y)
\end{equation}
which has a convergence rate of $1-\mu^2/\nu^2$ if $\alpha=\mu/\nu^2$.
We also consider a commonly used gradient-based method -- ADAM \cite{kingma2014adam} applied directly to the surrogate loss. We take two values of the distortion parameter: $\tau = 5$ (optimal) and $\tau=50$. In both cases, the proposed DYS method converges significantly faster than the alternatives, and the results illustrate an additional benefit of regularising via distortion, besides improving the test error: the $\tau=5$ case converges significantly faster than $\tau=50$.


At the computed  point $x^\star=\gG^{-1}(0)$ for $\tau=5$, the true function \eqref{eq:multiRosen} takes a value of $R(x^\star)=0.041$. This is more than an order of magnitude better than the smallest value of $R(x)$ over the training data, which ranged over $ [0.475, 6.532]$, indicating that PLNets have a useful ``implicit bias'' and do not simply interpolate the training data.

\begin{figure}
    \centering
    \includegraphics[width=\linewidth]{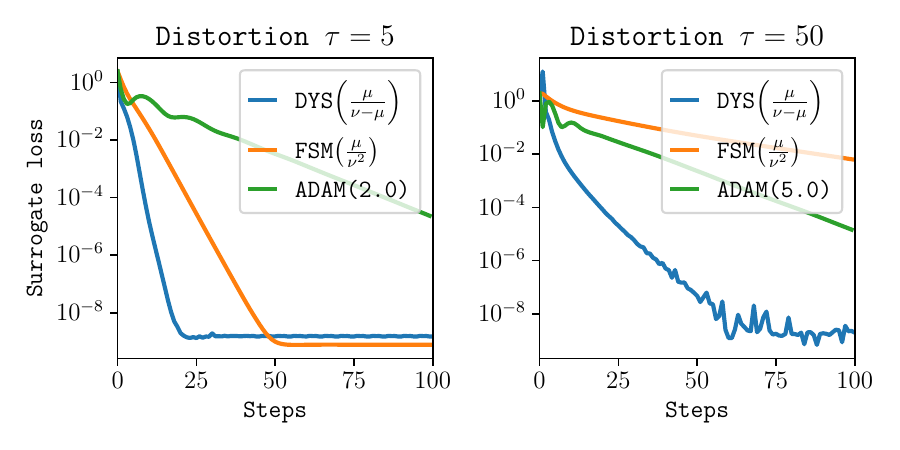}
    \caption{Solver comparison for finding the global minimum of a PLNet. We try a range of rates $[0.1, 0.5, 1.0, 2.0, 5.0]$ for ADAM and present the best result. The proposed back solve method with DYS algorithm \eqref{eq:davis-yin} converges much faster than ADAM applied to $f$ or back solve method with FSM algorithm \eqref{eq:fwd}. }
    \label{fig:solver}
\end{figure}

\begin{figure*}[!tb]
    \centering
    \begin{tabular}{c}         
    \includegraphics[width=0.82\linewidth]{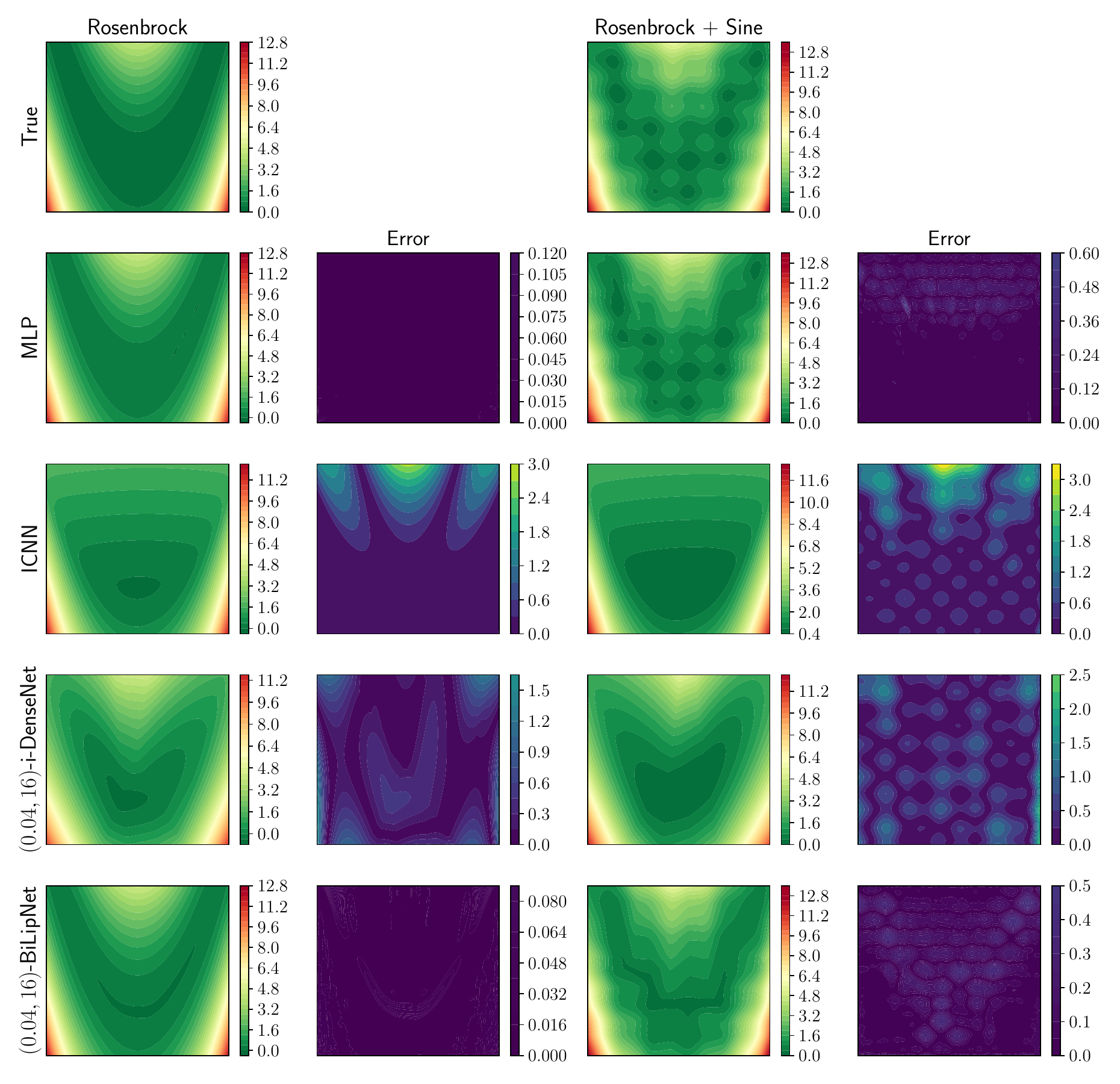} 
    \end{tabular}
    \caption{Learning a surrogate loss for the Rosenbrock and Rosenbrock+Sine functions, which is non-convex and has many local minima. The first row contains the true functions while the remaining rows show learned functions and errors for various surrogate loss models.}
    \label{fig:rosenbrock}
\end{figure*}

\begin{figure*}[!tb]
    \centering
    \begin{tabular}{c}
         \includegraphics[width=0.68\linewidth]{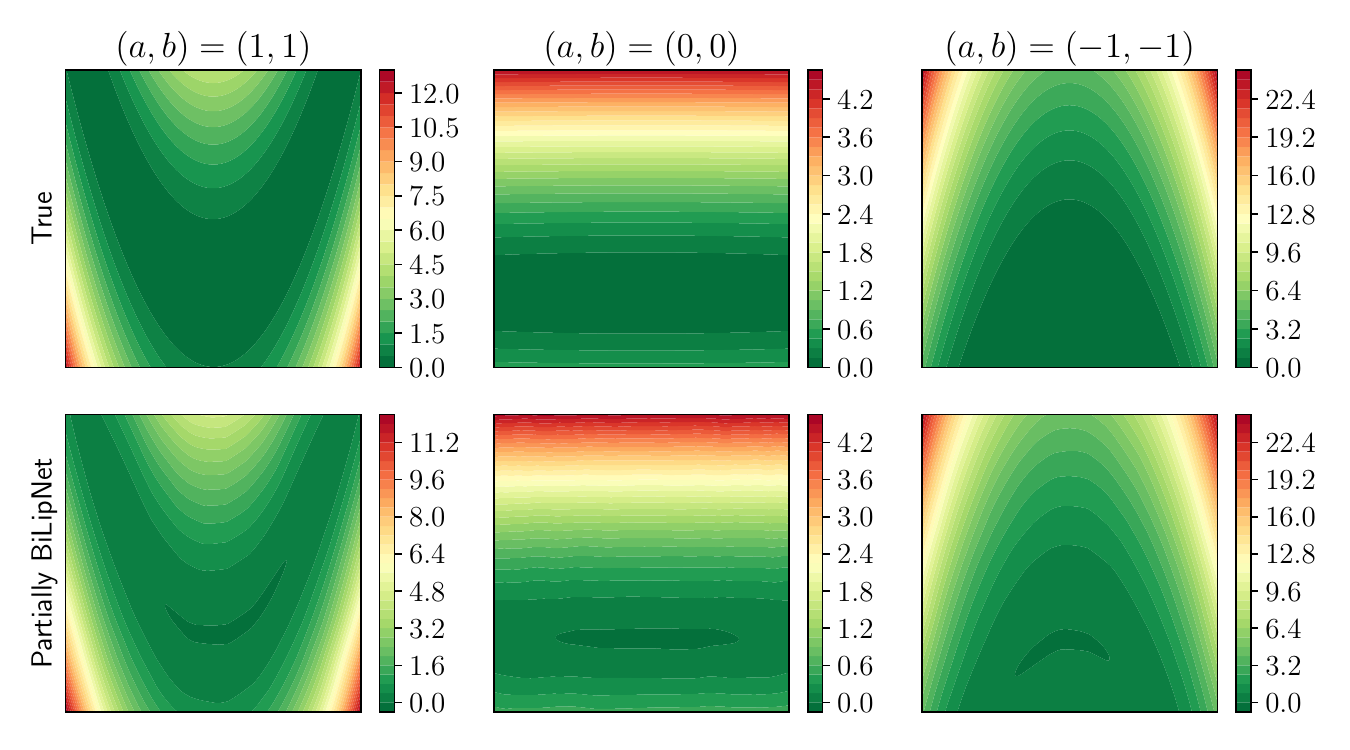}
    \end{tabular}
    \caption{Learning a parameterized Rosenbrock function $r(x,y; a,b)=1/200(x-a)^2+0.5(y-b x^2)^2$ via a partial PLNet.}
    \label{fig:rosenbrock2}
\end{figure*}

\section{Conclusion}
This paper has introduced a new bi-Lipschitz network architecture, the BiLipNet, and a new scalar-output network, the PLNet which satisfies the Polyak-\L{}ojasiewicz condition, and is hence ``easily optimizable''.

The core technical contribution is a new layer-type: the ``feed-through'' layer, which has certified bounds for strong monotonicity and Lipschitzness.
By composing  with orthogonal layers we obtain a bi-Lipschitz network structure (BiLipNet) which has much tighter bounds than existing bi-Lipschitz residual networks based on spectral normalization. The PLNet composes a BiLipNet with a quadratic output layer, and guarantees unique global minimum which is efficiently computable.

\clearpage
\section*{Impact Statement}

There are many application domains in which the trustworthiness of machine learning is a live topic of debate and raises important and challenging questions. The goal of this paper is to advance the sub-field of machine learning methods which have mathematically-certified properties. In particular, in this paper one application is uncertainty quantification. We hope that a positive impact of our paper and others like it will be to the development of ML methods that can better satisfy societal expectations of trustworthiness and transparency.

We are not aware of any potentially significant negative impacts that are particularly associated with this line of research (models with certified properties).


\bibliography{ref}
\bibliographystyle{icml2024}

\newpage
\appendix
\onecolumn

\section{Model Parameterization}\label{sec:model-param-detail}

A model parameterization is a mapping $\gM:\phi \rightarrow\theta $ where $\phi \in \R^N$ is a free learnable parameter while $\theta$ includes the model weights $U\in \R^{m\times n},W\in\R^{m\times m}, Y\in\R^{n\times m}$ and IQC multiplier $\Lambda\in \sD_+^m$ with $n,m$ as the dimensions of the input and hidden units, respectively. The aim of this section is to construct a parameterization such that the large-scale SDP constraint \eqref{eq:sdp} holds, i.e., $Y=U^\top \Lambda$ and
\begin{equation}\label{eq:H}
     \begin{split}
         H=2\Lambda -W^\top\Lambda-\Lambda W
         =\begin{bmatrix}
        2\Lambda_1 & - {W}_2^\top \Lambda_2 \\ 
         -  \Lambda_2 W_2  & 2\Lambda_2 & -{W}_3^\top \Lambda_3  \\
        & \ddots & \ddots & \ddots \\
        & & -\Lambda_{L-1} W_{L-1} & 2\Lambda_{L-1} & - W_L^\top \Lambda_L  \\
        & & & -\Lambda_L {W}_L & 2\Lambda_L
    \end{bmatrix}\geq \frac{2}{\gamma} Y^\top Y
     \end{split}
\end{equation}
Since $H\succeq 0$ has band structure, it can be represented by $H=X X^\top $ \cite{davis2006direct}. Moreover, from Lemma 3 of \cite{rantzer1996kalman} we have that any $U, Y$ satisfying $Y=U^\top \Lambda$ and $XX^\top \succeq \frac{2}{\gamma} Y^\top Y$ can be represent by
\begin{equation}\label{eq:UY}
    U=\sqrt{\gamma/2} \Lambda^{-1}X Q,\quad   Y=\sqrt{\gamma/2} Q^\top X^\top
\end{equation}
where $Q\in \R^{m\times n}$ with $QQ^\top \preceq I$. The remaining task is to find $X$ such that $H=XX^\top$ has the same sparse structure as \eqref{eq:H}, which was solved by \cite{wang2023direct}. For self-contained purpose, we provide detail construction as follows. First, we further parameterize $ X= \Psi P$, where $ \Psi=\mathrm{diag}(\Psi_1,\ldots,\Psi_L)$ with $\Psi_k\in \sD_+^{m_k}$ and 
\[
\begin{split}
P=
\begin{bmatrix}
    A_1 \\
    -B_2 & A_2 \\
    & \ddots & \ddots \\
    & & -B_L & A_L
\end{bmatrix}.
\end{split}
\]
By comparing $H=\Psi P P^\top \Psi$ with \eqref{eq:H} we have 
\begin{gather*}
    H_{kk}=\Psi_k(B_k B_k^\top + A_k A_k^\top)\Psi_k=2\Lambda_k, \quad
    H_{k-1,k}=-\Psi_{k} B_k A_{k-1}^\top =-\Lambda_k W_k,
\end{gather*}
which further leads to 
\begin{gather}
    \Psi_k^2=2\Lambda_k, \quad B_k B_k^\top + A_kA_k^\top = I,\quad W_k=2\Psi_k^{-1}B_kA_{k-1}^\top \Psi_{k-1} \quad k=1,\ldots,L,  \label{eq:PAB} 
\end{gather}
with $B_1=0$. We have converted the large-scale SDP constraint \eqref{eq:H} into many simple and small-scale constraints such as 
\begin{equation}\label{eq:sdp-simple}
    \Psi_k^2=2\Lambda_k, \quad R_k R_k^\top=I,\quad QQ^\top \preceq I
\end{equation}
with $R_k=\begin{bmatrix}
    B_k & A_k
\end{bmatrix}$, which further can be easily parameterized via the Cayley transformation \eqref{eq:cayley}, see \cref{sec:model-param}. The Cayley transformation has been applied to construct orthogonal layers \cite{helfrich2018orthogonal,li2019efficient,trockman2021orthogonalizing} and 1-Lipschitz Sandwich layer \cite{wang2023direct}. 

\paragraph{An equivalent model representation.} The model weights $U, Y, W$ defined in \eqref{eq:direct-param} can be rewritten as $U=\sqrt{2\gamma}\Psi^{-1}S$, $Y=\sqrt{\gamma/2}S^\top \Psi^{-1}$ and $W=\Psi^{-1} W \Psi$ with
\begin{equation}\label{eq:VS}
    S=\begin{bmatrix}
    S_1 \\ S_2 \\ \vdots \\ S_L
\end{bmatrix}=
\begin{bmatrix}
    A_1 Q_1 \\
    A_2Q_2-B_2Q_{1} \\
    \vdots \\
    A_LQ_L-B_LQ_{L-1}
\end{bmatrix},\quad V=\begin{bmatrix}
        0 & \\
        V_2 & 0 \\
        & \ddots & \ddots \\
        & & V_L & 0
    \end{bmatrix}=
    \begin{bmatrix}
        0 & \\
        2 B_2 A_{1}^\top & 0 \\
        & \ddots & \ddots \\
         & & 2 B_L A_{L-1}^\top & 0
    \end{bmatrix}
\end{equation}
where $Q=\begin{bmatrix}
    Q_1^\top & \cdots & Q_L^\top
\end{bmatrix}^\top$. Then, the network \eqref{eq:network-compact} can be written as 
\begin{equation}
    z=\sigma(\Psi^{-1}V\Psi z + \sqrt{2\gamma}\Psi^{-1} S x + b),\quad y= \mu x + \sqrt{\gamma/2}S^\top \Psi z.
\end{equation}
By introducing the new hidden state $\hat{z}=\Psi z$ and bias $\hat{b}=\Psi b$, we obtain an equivalent form:
\begin{equation}
    \hat{z}=\hat{\sigma}\bigl(V\hat{z}+\sqrt{2\gamma}S x+\hat b\bigr), \quad y=\mu x+ \sqrt{\gamma/2}S^\top\hat{z}+b_y.
\end{equation}
This representation is useful for computing the model inverse via monotone operator splitting, see \cref{sec:operator-split}. We now give a lemma which will be used later for proving some propositions.
\begin{lemma}\label{lemma:VS}
    For the matrices $V,S$ defined in \eqref{eq:VS} we have 
    \begin{equation}
        2I-V-V^\top \succeq 0,\quad 2I - S S^\top \succeq 0.
    \end{equation}
\end{lemma}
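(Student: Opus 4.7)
The plan is to link both inequalities to the block bidiagonal matrix
\[
P=\begin{bmatrix} A_1 & & & \\ -B_2 & A_2 & & \\ & \ddots & \ddots & \\ & & -B_L & A_L\end{bmatrix}
\]
already introduced in this appendix through the factorisation $X=\Psi P$. The only algebraic input needed is the row-orthonormality $[B_k\;A_k][B_k\;A_k]^\top = B_kB_k^\top+A_kA_k^\top = I$ (enforced by the Cayley step, and with $B_1=0$ giving $A_1A_1^\top=I$), together with the companion bound $Q^\top Q\preceq I$ on the $Q$-block produced by the Cayley construction.

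First I would compute $PP^\top$ block-wise. Every diagonal block collapses to $I$ by row-orthonormality, and the two off-diagonal blocks adjacent to the diagonal are $-B_kA_{k-1}^\top$ at position $(k,k-1)$ and $-A_{k-1}B_k^\top$ at position $(k-1,k)$. Since $V_k=2B_kA_{k-1}^\top$ defines the only nonzero blocks of $V$, these are exactly $-\tfrac12 V_k$ and $-\tfrac12 V_k^\top$, so
\[
PP^\top \;=\; I-\tfrac12(V+V^\top).
\]
The first claim is then immediate: $2I-V-V^\top = 2PP^\top\succeq 0$.

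For the second claim, the key structural observation is that $S=PQ$ where $Q=[Q_1^\top\;\cdots\;Q_L^\top]^\top$, since the $k$-th block of $PQ$ is exactly $A_kQ_k-B_kQ_{k-1}=S_k$ (using the convention $Q_0=0$). From $Q^\top Q\preceq I$ we get $\|Q\|_\mathrm{op}\leq 1$ and hence $QQ^\top\preceq I$, which when sandwiched by $P$ gives $SS^\top=P(QQ^\top)P^\top\preceq PP^\top$. It then remains to show $PP^\top\preceq 2I$; I would do this by the symmetric trick of flipping every $-B_k$ to $+B_k$ in $P$, producing a matrix $P'$ whose blockwise computation yields $P'(P')^\top=I+\tfrac12(V+V^\top)\succeq 0$, equivalent to $V+V^\top\succeq -2I$ and therefore to $PP^\top\preceq 2I$. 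Combining gives $2I-SS^\top\succeq 2I-PP^\top\succeq 0$.

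The main obstacle is really just spotting the two structural identities $S=PQ$ and $PP^\top=I-\tfrac12(V+V^\top)$; once these are written down, both semidefiniteness conclusions drop out of row-orthonormality of the Cayley blocks and the sub-orthogonality of $Q$. There are no analytic subtleties, only careful block bookkeeping.
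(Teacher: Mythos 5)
Your proof is correct, and it establishes the lemma by a cleaner route than the paper. Both arguments rest on the same structural ingredients — the factorisation $S=PQ$, the row-orthonormality $A_kA_k^\top+B_kB_k^\top=I$ with $B_1=0$, and the sub-orthogonality bound $QQ^\top\preceq I$ (equivalent to $Q^\top Q\preceq I$ since the two Gram matrices share nonzero spectrum). Where the two proofs diverge is in how the positive-semidefiniteness is certified. The paper writes out the tridiagonal block matrices $2I-(V+V^\top)$ and $2I-PP^\top$ explicitly and then argues they are PSD by sequentially applying Schur complements to the leading diagonal block; this works but requires an inductive bookkeeping step that the paper leaves implicit. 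You instead spot the identity $PP^\top=I-\tfrac12(V+V^\top)$, which makes $2I-V-V^\top=2PP^\top$ PSD \emph{immediately} as a Gram matrix, and then handle the upper bound $PP^\top\preceq 2I$ by the sign-flip trick — forming $P'$ with $+B_k$ in place of $-B_k$ and reading off $P'(P')^\top=I+\tfrac12(V+V^\top)\succeq 0$, hence $V+V^\top\succeq -2I$ and $PP^\top\preceq 2I$. This replaces the Schur-complement induction with two Gram-matrix identities and is, to my mind, both more transparent and more robust (it does not rely on the tridiagonal structure beyond what is already encoded in $P$). The chain $SS^\top=PQQ^\top P^\top\preceq PP^\top\preceq 2I$ then finishes the second claim exactly as in the paper. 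No gap.
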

\begin{proof}
First, we have 
\begin{equation*}
    2I-(V+V^\top) =2\begin{bmatrix}
    I & -A_1B_2^\top \\
    -B_2A_1^\top & I & -A_2B_3^\top \\
    & -B_3^\top A_2 & \ddots & \ddots \\ 
    & & \ddots & \ddots  
\end{bmatrix}\succeq 0
\end{equation*}
where the inequality is obtained by sequentially applying the fact $A_kA_k^\top + B_kB_k^\top=I$ and Schur complement to the top diagonal block. For the inequality on $S$, we have 
\begin{equation*}
\begin{split}
    2I-S S^\top&= 2I - PQQ^\top P^\top \succeq 2I-PP^\top \\
    &=2I-\begin{bmatrix}
    A_1 \\
    -B_2 & A_2 \\
    & \ddots & \ddots \\
    & & -B_L & A_L
\end{bmatrix}
\begin{bmatrix}
    A_1 \\
    -B_2 & A_2 \\
    & \ddots & \ddots \\
    & & -B_L & A_L
\end{bmatrix}^\top=\begin{bmatrix}
    I & A_1B_2^\top \\
    B_2A_1^\top & I & A_2B_3^\top \\
    & B_3^\top A_2 & \ddots & \ddots \\ 
    & & \ddots & \ddots  
\end{bmatrix}\succeq 0.
\end{split}
\end{equation*}
Similarly, the last inequality can be established by sequentially applying the Schur complement to the top diagonal block.
\end{proof}

\section{Monotone Operator Splitting for Computing Model Inverse}\label{sec:operator-split}
Inspired by \cite{winston2020monotone,revay2020lipschitz}, we try to compute $x=\gF^{-1}(y)$ via an operator splitting method. We first present some background of monotone operator theory based on the survey \cite{ryu2016primer}, and then reformulate the model inverse as a three-operator splitting problem. 

\subsection{Monotone operator}\label{sec:mon-operator}

An {\em operator} is a set-valued or single-valued map defined by a subset of the space $\gA\subseteq \R^n\times\R^n$; we use the notation $\gA(x)=\{y\mid (x,y)\in \gA\}$. For example, the affine operator is defined by $ \gL(x)=\{(x,W x +b)\mid x\in\R^n\}$. Another important example is the subdifferential operator $\partial f=\{(x,\partial f(x))\}$ for a 
proper function $f:\R^n\rightarrow\R\cup \{\infty\}$ with $f(z)=\infty$ for $z\notin \dom f$,  where $\partial f(x)=\{g\in\R^n\mid f(y)\geq f(x)+\inprod{y-x}{g},\,\forall y\in\R^n\}$. An operator $\gA$ has a Lipschitz bound of $L$ if $ \|u-v\| \leq L\|x-y\| $ for all $(x,u),(y,v)\in \gA$. It is {\em non-expansive} if $L=1$ and {\em contractive} if $L<1$. $\gA$ is {\em strongly monotone} with $m>0$ if 
\begin{equation}
	\inprod{u-v}{x-y} \geq m\|x-y\|, \quad \forall (x,u),(y,v)\in \gA.
\end{equation}
If the above inequality holds for $m=0$, we call $\gA$ a monotone operator. Similarly, $\gA$ is said to be \emph{inverse monotone} with $\rho$ if $\inprod{u-v}{x-y} \geq \rho\|u-v\|, \ \forall (x,u),(y,v)\in \gA$. An operator is called {\em maximal monotone} if no other monotone operator strictly contains it. The linear operator $\gL$ is $m$-strongly monotone if $ W+W^\top \succeq 2mI$, and $\rho$-inverse monotone if $W+W^\top \succeq 2\rho W^\top W$. A subdifferential $\partial f$ is maximal monotone if and only if $f$ is a convex closed proper (CCP) function. Here are some basic operations for operators:
\begin{itemize}
    \item the operator sum $\gA+\gB=\{(x,y+z)\mid (x,y)\in \gA,\, (x,z)\in \gB\}$; 
    \item the composition $\gA \gB=\{(x,z)\mid \exists y\; \mathrm{s.t.}\; (x,y)\in\gA, (y,z)\in\gB\}$ ;
    \item the inverse operator $\gA^{-1}=\{(y,x)\mid (x,y)\in \gA\}$; 
    \item the {\em resolvent} operator $R_{\gA}=(I+\alpha \gA)^{-1}$ with $\alpha>0$;
    \item the {\em Cayley} operator $C_{\gA}=2R_{\gA}-I$.
\end{itemize}
Note that the resolvent and Cayley operators are non-expansive for any maximal monotone $\gA$, and are contractive if $\gA$ is strongly monotone. For a linear operator $\gL$ we have $R_{\gL}(x)=(I+\alpha W)^{-1}(x-\alpha b)$. For a subdifferential operator $\partial f$, its resolvent is $R_{\partial f}(x)=\prox_f^\alpha(x):=\argmin_{z}1/2\|x-z\|+\alpha f(z)$, which is also called the \emph{proximal operator}.

\paragraph{Activation as proximal operator.} As shown in \cite{li2019lifted,revay2020lipschitz}, many popular slope-restricted scalar activation functions can also be treated as proximal operators. To be specific, if $\sigma:\R\rightarrow\R$ is slope-restricted in $[0,1]$, then there exists a convex proper function $f$ such that $\sigma(\cdot)=\prox_f^1(\cdot)$. For self-contained purpose,  we provide a list of common activations and their associated convex proper functions in \cref{tab:sigma-f}, which can also be found in \cite{revay2020lipschitz,li2019lifted}.

\begin{table}[t]
\caption{A list of common activation functions and their associated convex proper $f(z)$ whose proximal operator is $\sigma(x)$ \cite{revay2020lipschitz}. For $z\notin \dom f$, we have $f(z)=\infty$. In the case of Softplus activation, $\mathrm{Li}_s(z)$ is the polylogarithm function.} \label{tab:sigma-f}
\begin{center}
    \begin{tabular}{|c|c|c|c|}
        \hline  &&& \\
        Activation & $\sigma(x)$ & Convex $f(z)$ & $\dom f$ \\ &&& \\ \hline &&& \\
        ReLu & $\max(x,0)$ & $0$ & $[0,\infty)$\\ &&& \\
        LeakyReLu &$\max(x,0.01x)$ & $\frac{99}{2}\min(z,0)^2$ & $\R$ \\ &&& \\
        Tanh &$ \tanh(x)$ & $\frac{1}{2}\left[\ln(1-z^2)+z\ln\left(\frac{1+z}{1-z}\right)-z^2\right]$ & $(-1,1)$ \\ &&& \\
        Sigmoid &  $1/(1+e^{-x})$ & $z\ln z+(1-z)\ln(1-z)-\frac{z^2}{2}$ & $(0,1)$\\ &&& \\
        Arctan & $\arctan(x)$ & $-\ln(|\cos z|)-\frac{z^2}{2}$ & $(-1,1)$\\ &&& \\
        Softplus & $\ln(1+e^x)$ & $ -\mathrm{Li}_2(e^z)-i\pi z-z^2/2 $ & $(0,\infty)$ \\ &&& \\ \hline
    \end{tabular}
\end{center}
\end{table}

\subsection{Operator splitting} 
Many optimization problems (e.g. convex optimization) can be formulated as one of finding a zero of an appropriate monotone operator $\gF$, i.e., find $x\in \R^n$ such that $0\in \gF(x)$. Note that $x$ is a solution if and only if it is a fixed point $x=\gT(x)$ with $\gT=I-\alpha \gF$ for any nonzero $\alpha \in \R$. The corresponding fixed point iteration is $x^{k+1}=\gT(x^k)=x^k-\alpha \gF(x^k)$. If $\gF$ is $m$-strongly monotone and $L$-Lipschitz, then this iteration converges by choosing $\alpha \in (0, 2m/L^2)$. The optimal convergence rate is $1-(m/L)^2$, given by $\alpha=m/L^2$. 

If $\gF$ contains some non-smooth components, we then split $\gF$ into two or three maximal operators:
\begin{align}
    \text{two-operator splitting problem:}\quad & 0\in \gA(x)+\gB(x) \\
    \text{three-operator splitting problem:}\quad & 0\in \gA(x)+\gB(x)+\gC(x)
\end{align}
where $\gA,\gB,$ and $\gC$ are maximal monotone. The main benefit of such splitting is that the resolvent or Cayley operators for individual operator are easy to evaluate, which further leads to more computationally efficient algorithms. For two-operator splitting problem, some popular algorithms include
\begin{itemize}
    \item forward-backward splitting (FBS) $x=R_{\gB} (I-\alpha \gA)(x)$
    \item forward-backward-forward splitting (FBFS) $x=((I-\alpha \gA) R_{\gB} (I-\alpha \gA)+\alpha \gA)(x)$
    \item Peaceman-Rachford splitting (PRS) $ z=C_{\gA} C_{\gB}(z),\; x=R_{\gB}(z)$
    \item Douglas-Rachford splitting (DRS) $ z=(1/2I+1/2C_{\gA} C_{\gB})(z),\; x=R_{\gB}(z)$
\end{itemize} 
where the corresponding fixed-point iterations, the choices of hyper-parameter $\alpha$ and convergence results can be found in \cite{ryu2016primer}. For three-operator splitting problem, the Davis-Yin splitting (DYS) \cite{davis2017three} can be expressed by $z=\gT(z),\, x=R_{\gB}(z)$ where $\gT=C_{\gA}(C_{\gB}-\alpha \gC R_{\gB})-\alpha \gC R_{\gB}$. 

\subsection{Operator splitting perspective for $\gF^{-1}$} \label{sec:pf-three-op}

As shown in the proof of \cref{prop:three-op}, By applying the forward-backward splitting with parameter $\alpha=1$, we can compute the solution $z$ via the following iteration:
\begin{equation*}
    \begin{split}
        z^{k+1}= R_{\gB}(z^k- \widehat{\gA}\bigl(z^k\bigr)) 
        =\hat \sigma\left(\left(V-\gamma/\mu SS^\top\right)z^k+b_z\right).
    \end{split}
\end{equation*}
It is worth pointing out that the above iteration may not converge for the choice of $\alpha=1$. In practice we often use more stable and faster two-operator splitting algorithms (e.g., PRS or DRS), see \cite{winston2020monotone,revay2020lipschitz}. In this work, the motivation for further decomposing the monotone operator $\widehat\gA$ into two monotone operators $\gA,\gC$ is that $R_{\gA}$ is a large-scale linear equation with nice sparse structure while $R_{\widehat \gA}$ is dense due to the full weight matrix in $\gC$.

\paragraph{Fixed-point iteration.} We now apply the DYS algorithm from \cite{davis2017three} to $0\in \gA(z)+\gB(z)+\gC(z)$, resulting in the following fixed-point iteration:
\begin{equation}\label{eq:DYS-FPI}
    \begin{split}
        z^{k+1/2}&=R_\gB (u^k)=\prox_f^\alpha (u^k) \\
        u^{k+1/2}&=2z^{k+1/2}-u^k \\ 
        z^{k+1}&=R_\gA(u^{k+1/2}-\alpha\gC(z^{k+1/2}))\\
        u^{k+1}&=u^k+z^{k+1}-z^{k+1/2}
    \end{split}
\end{equation}
where the third line is a large-scale sparse linear equation of the form
\[
\begin{bmatrix}
    (1+\alpha) I \\
    -\alpha V_{21} & (1+\alpha)I \\
    & \ddots & \ddots \\
    & & -\alpha V_{L,L-1} & (1+\alpha)I
\end{bmatrix}
\begin{bmatrix}
    z_1^{k+1} \\ z_2^{k+1} \\ \vdots \\ z_L^{k+1}
\end{bmatrix}=u^{k+1/2}+\alpha\left(b_z -\frac{\gamma}{\mu}SS^\top z^{k+1/2}\right).
\]

By introducing $v^{k+1/2}=b_z-\gamma/\mu SS^\top z^{k+1/2}$, we have
\begin{equation}
    z_0^{k+1}=0,\quad z_l^{k+1}=\frac{\alpha}{1+\alpha}\left(V_{l,l-1}z_{l-1}^{k+1}+v_{l}^{k+1/2}\right)+\frac{1}{1+\alpha}u_l^{k+1/2},\quad l=1,\ldots, L.
\end{equation}
\paragraph{Convergence range for the hyper-parameter $\alpha$.} From the previous paragraph, we know that \eqref{eq:davis-yin} is equivalent to the FPI \eqref{eq:DYS-FPI}. From Theorem~1.1 of \cite{davis2017three}, we have that \eqref{eq:DYS-FPI} converges for any $\alpha\in (0, 2\beta)$ with $\beta$ as the inverse-monotone bound of $\gC$. From \cref{lemma:VS} we have $2I\succeq S^\top S$ and
\[
\frac{2\gamma}{\mu} SS^\top \succeq \frac{\gamma}{\mu} S (S^\top S)S^\top = \frac{\mu}{\gamma} (\gamma/\mu SS^\top)^2=2\beta (\gamma/\mu SS^\top)^2
\]
i.e., $\gC(z)$ is inverse monotone with  $\beta=\mu/(2\gamma)$. Therefore, \eqref{eq:davis-yin} converges for any $\alpha \in (0, \mu/\gamma)$. Since $\gamma=\nu-\mu$ and $\tau=\nu/\mu$, we then obtain the convergence range in term of model distortion $\tau$, i.e., $\alpha \in (0, 1/(\tau-1))$. Larger $\alpha$ often implies faster convergence rate, see \cref{fig:dys}.

\begin{figure}[!tb]
    \centering
    \includegraphics[width=\linewidth]{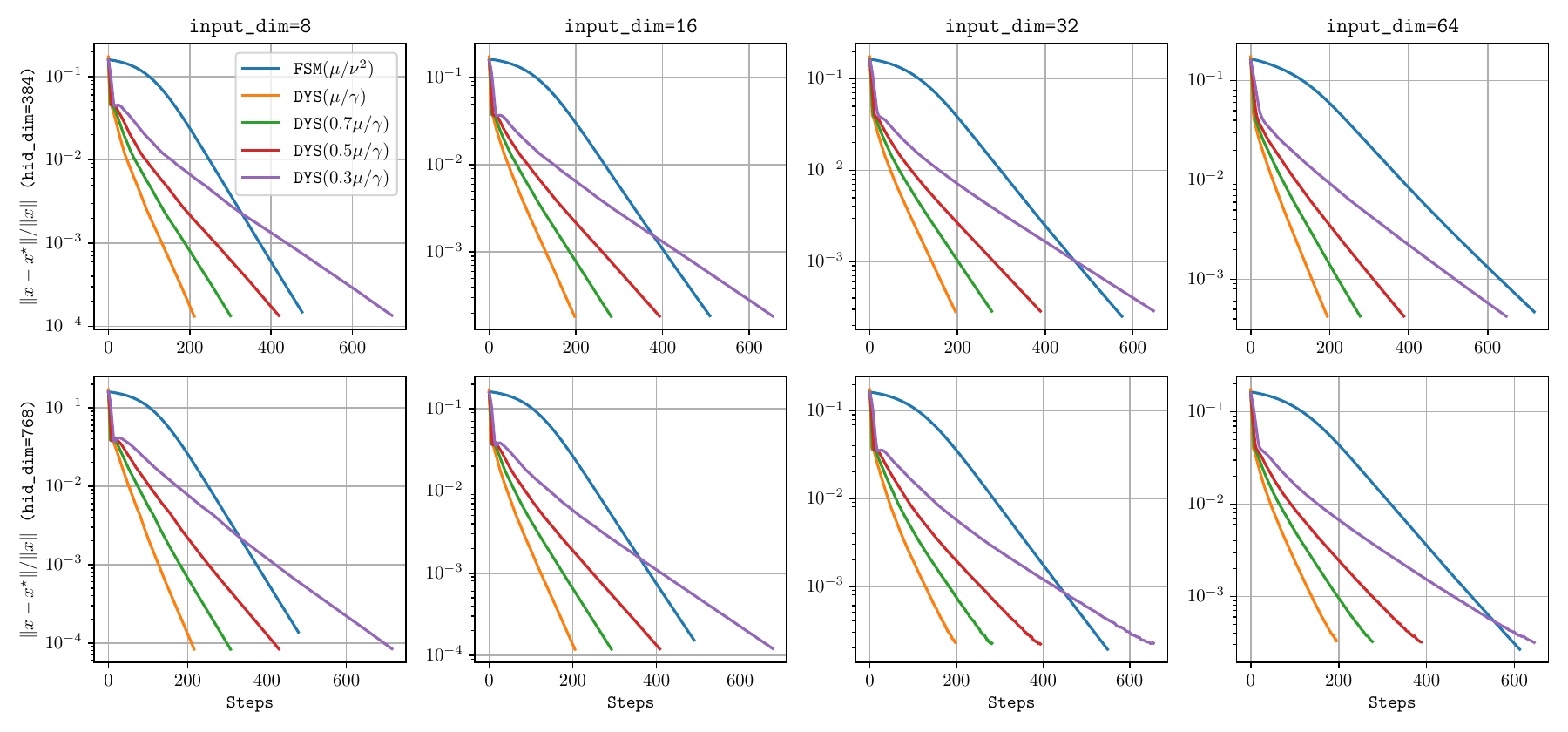}
    \caption{Solver comparison for computing the inverse of random $\mu$-monotone and $\nu$-Lipschitz layers with different input and hidden unit dimensions. We obverse that DYS \eqref{eq:DYS-FPI} converges faster for lager $\alpha$. If $\alpha$ is close to the bound $\mu/\gamma$ with $\gamma=\nu-\mu$, DYS converges much faster rate than FSM \eqref{eq:fwd} with hyper-parameter $\alpha=\mu/\nu^2$, which achieves its best convergence rate \cite{ryu2016primer}.}
    \label{fig:dys}
\end{figure}

\section{Proofs}
\subsection{Proof of \cref{thm:main}}\label{sec:pf-thm1}
We consider the neural network $\gH:x\rightarrow \tilde{y}$ defined by
\begin{equation}
    v=Wz+Ux+b,\quad z=\sigma(v),\quad \tilde y=Yz+b_y.
\end{equation}
Since $\gF(x)=\mu x+\gH(x)$, then $\gF$ is $\mu$-strongly monotone and $\nu$-Lipschitz if $\gH$ is monotone and $\gamma$-Lipschitz with $\gamma=\nu-\mu$.

For any pair of solutions $s_1=(x_1,v_1,z_1,\tilde{y}_1)$ and $s_2=(x_2,v_2,z_2,\tilde{y}_2)$, their difference $\Delta s=s_1-s_2$ satisfies
\begin{equation}\label{eq:difference}
    \Delta v= W\Delta z+ U\Delta x,\quad \Delta z=J_\sigma(v_1,v_2) \Delta v,\quad \Delta \tilde{y}=Y\Delta z
\end{equation}
where $J_\sigma$ is a diagonal matrix with $[J_{\sigma}]_{ii}\in [0,1]$ since $\sigma$ is an elementwise activation with slope restricted in $[0,1]$. For any $\Lambda\in \mathbb{D}_+^m$ we have
\begin{equation}\label{eq:iqc}
    \inprod{\Delta v - \Delta z}{\Lambda \Delta z}=\Delta v^\top (I-J_\sigma)\Lambda J_\sigma \Delta_v \geq 0,\quad \forall \Delta v\in \R^m.
\end{equation} 
Based on \eqref{eq:difference}, \eqref{eq:iqc} and Condition \eqref{eq:sdp} we have
\begin{equation*}
    \begin{split}
        \inprod{\Delta x}{\Delta \tilde y}-\inprod{\Delta v - \Delta z}{\Lambda \Delta z} 
        =&\inprod{\Delta x}{Y\Delta z}-\inprod{(W-I)\Delta z+U\Delta x}{\Lambda \Delta z} \\
        =&\inprod{\Delta x}{Y\Delta z}-\inprod{\Delta x}{U^\top \Lambda \Delta z}+ \inprod{(I-W)\Delta z}{\Lambda \Delta z} \\
        =&\frac{1}{2}\Delta z^\top \left(\Lambda(I-W)+(I-W^\top)\Lambda\right)\Delta z\geq \|Y\Delta z\|^2\geq 0,
    \end{split}
\end{equation*}
which further implies $ \inprod{\Delta x}{\Delta y}-\mu\|\Delta x\|^2 \geq \inprod{\Delta v - \Delta z}{\Lambda \Delta z}\geq 0$. Thus, $\gH$ is monotone. We can use the similar technique to derive the Lipschitz bound of $\gH$. Firstly we have
\begin{equation*}
    \begin{split}
        \gamma\|\Delta x\|^2-\frac{1}{\gamma}\|\Delta \tilde y\|^2-2\inprod{\Delta v - \Delta z}{\Lambda \Delta z} 
        =& \gamma\|\Delta x\|^2-\frac{1}{\gamma}\|\Delta \tilde y\|^2 + 2\inprod{(I-W)\Delta z}{\Lambda\Delta z}-2\inprod{U\Delta x}{\Lambda \Delta z}  \\
        =& \gamma\|\Delta x\|^2-2\inprod{\Delta x}{\Delta \tilde y}-\frac{1}{\gamma}\|\Delta \tilde y\|^2 + \Delta z^\top (2\Lambda -\Lambda W-W^\top \Lambda)\Delta z \\
        \geq & \gamma\|\Delta x\|^2-2\inprod{\Delta x}{\Delta \tilde y}-\frac{1}{\gamma}\|\Delta \tilde y\|^2 +\frac{2}{\gamma}\|Y\Delta z\|^2
        = \left\|\sqrt{\gamma}\Delta x-\frac{1}{\sqrt{\gamma}}\Delta \tilde y \right\|^2.
    \end{split}
\end{equation*}
Due to \eqref{eq:iqc} we can further obtain $\gamma^2\|\Delta x\|^2\geq \|\Delta \tilde y\|^2$, i.e., $\gH$ is $\gamma$-Lipschitz.

\subsection{Proof of \cref{prop:direct-param}}

\textit{Sufficient part: \eqref{eq:direct-param} $\Rightarrow$ \eqref{eq:sdp}.} From \eqref{eq:UY} we have that $Y=U^\top \Lambda$. We check the inequality part of \eqref{eq:sdp} as follows:
\[
\begin{split}
    2\Lambda - W^\top \Lambda-\Lambda W= \Psi P P^\top \Psi =X X^\top\succeq X Q Q^\top X^\top = \frac{2}{\gamma} Y^\top Y.
\end{split}
\]

\textit{Necessary part: \eqref{eq:sdp} $\Rightarrow$ \eqref{eq:direct-param}.} Since $H\succeq 0$ has band structure, then it can be decomposed into $H=XX^\top$  where $X$ has the following block lower triangular structure \cite{davis2006direct}:
\begin{equation*}
    X=\begin{bmatrix}
        X_{11} \\
        X_{21} & X_{22} \\
        & \ddots & \ddots \\
        & & X_{L,L-1} & X_{LL}
    \end{bmatrix}.
\end{equation*}
For this special case, a way to construct $X$ from $\Lambda, W$ and further computation of the free parameters $d, F_k^a, F_k^b, F^q, F^\star$ can be found in \cite{wang2023direct}. Finally, we need to show that $ X X^\top \succeq 2/\gamma Y^\top Y$ is equivalent to $ Y=\sqrt{\gamma/2}Q^\top X^\top$ for some $QQ^\top \preceq I$, which can be directly followed by Lemma 3 of \cite{rantzer1996kalman}.

\subsection{Proof of \cref{prop:F-inverse}}
From \cref{lemma:VS} we have
\begin{equation}
    2I-(V-\gamma/\mu SS^\top)-(V-\gamma/\mu SS^\top)^\top=2I-V-V^\top+2\gamma/\mu SS^\top \succeq 2\gamma/\mu SS^\top \succeq 0.
\end{equation}
Then, the equilibrium network \eqref{eq:F-inverse} is well-posed by Theorem~1 of \cite{revay2020lipschitz}.

\subsection{Proof of \cref{prop:three-op}} 
We first show that $0\in \gA(z)+\gB(z)+\gC(z)$ is a monotone operator splitting problem. It is obvious that $\gB,\gC$ are maximal monotone operators.  From \cref{lemma:VS} we have $(I-V)+(I-V)^\top \succeq 0$, i.e. $\gA$ is also monotone. Then, we show that the above operator splitting problem  shares the same set of equilibrium points with the model inverse \eqref{eq:F-inverse}. First, we rewrite it into a two-operator splitting problem $0\in \widehat{\gA}(z) + \gB(z)$ where $\widehat{\gA}=\gA+\gC$. By applying the forward-backward splitting with parameter $\alpha=1$, we can compute the solution $z$ via the following iteration:
\begin{equation*}
    \begin{split}
        z^{k+1}=& R_{\gB}(z^k- \widehat{\gA}\bigl(z^k\bigr)) 
        =\prox_{f}^1\left(z^k-\left(I-V+\gamma/\mu SS^\top\right)z^k+b_z\right) 
        =\hat \sigma\left(\left(V-\gamma/\mu SS^\top\right)z^k+b_z\right).
    \end{split}
\end{equation*}
Thus, any solution $z^\star$ of the equilibrium network \eqref{eq:F-inverse} is also an equilibrium point of the above iteration. 

\subsection{Proof of \cref{prop:PL}}

First, we have $\nabla f(x)=G^\top(x) \gG(x)$ where $G(x)= \nabla \gG(x)$ satisfies $\|G(x)\|\geq \mu$. Then, the PL inequality holds for $f$ with $m= \mu^2$, i.e.,
    \begin{equation}
        \frac{1}{2}\|\nabla f(x)\|^2=\frac{1}{2} \gG(x)^\top G(x)^\top G(x)\gG(x)\geq \frac{\mu^2 }{2} \|\gG(x)\|^2=\mu^2 (f(x)-f^\star).
    \end{equation}

\section{Experiments}
\subsection{Training details}\label{sec:train-detail}
    We choose ReLU as our default activation and use ADAM \cite{kingma2014adam} with one-cycle linear learning rate \cite{coleman2017dawnbench} except the NGP case which SGD with piecewise constant scheduling. For the NGP case, we use the cross entropy loss while the L2 loss is used for the rest of the examples. We found that it can improve the model training by enforcing $Q^\top Q=I$, which can be done by fixing $F^p=0$. Dataset and model architectures are described as follows.

\paragraph{1D Step function.} The target function is a step function 
\[
f(x)=\begin{cases}
    2,& x>0 \\
    -2, & x<0
\end{cases}
\]
which is monotone and $0$-Lipschitz everywhere except the singularity point $x=0$. We try to fit this curve with $(0.1, 10)$-Lipschitz models. The optimal fit is a linear piecewise continuous function with slope of 10 near $x=0$ and slope of 0.1 near $x=\pm 2$. We take 1000 random samples from $[-2,2]$ for training. Our model (BiLipNet) is an one-layer residual network $\gF(x)=\mu x+ \gH(x)$ where $\gH$ has 8 hidden layers of width 32, giving the model 15.8K parameters. We compare to i-ResNet \cite{chen2019residual} and i-DenseNet \cite{perugachi2021invertible}, where the nonlinear block $\gH$ has 2 and 4 hidden layers, respectively. For those two models, we test for depth from 2 to 8 with proper hidden width (so that they has similar amount of parameters).  And the empirical Lipschitz bound is computed via finite difference over the test data. As shown in \cref{fig:step}, our model achieves much tighter bounds than other models.

\paragraph{Neural Gaussian process.} We take 1000 two-moon data points as training data and 1000 Gaussian samples with mean $(1.3, -1.8)$ and variance $(0.02, 0.01)$ as OOD data. For all models, we use fixed input weight to mapping the 2D input into 128D hidden space, then perform hidden space transformation using bi-Lipschitz models, and finally add a Gaussian process as the output layer. SNGP uses 3 residual layer $x+\gH(x)$ where the Lipschitz bound of $\gH$ is $c<1$. BiLipNet has one monotone and Lipschitz layer with two orthogonal layer, i.e., $K=1$ for \eqref{eq:bi-lip}. The nonlinear block $\gH$ of our model has 6 hidden layers with width of 32. Both models are chosen to have the same amount of parameters, roughly $233K$. 

\paragraph{CIFAR-10/100 datasets.} We first adopt the SNGP model from \cite{liu2020simple} and make some modifications as follows. 
\begin{itemize}
    \item SNGP contains three bi-Lipschitz components with each including four residual layers of the form $x+\gH(x)$. It used spectral norm bound $c=6$ for the weights inside $\gH$, which means that the bi-Lipschitz property may not hold. To provide a certified guarantee of bi-Lipschitzness we need $c\in (0,1)$. We tried three values of $c$: 0.35, 0.65 and 0.95. Since the Lipschitz bounds are $\mu=(1-c)^4$ and $\nu=(1+c)^4$, a larger $c$ implies a more expressive SNGP model.
    \item We ran the SNGP with/without batch normalization for the bi-Lipschitz components. As pointed out in \cite{liu2023simple}, the batch normalization may re-scale a layer’s spectral norm in unexpected ways. So there is no theoretical guarantee on bi-Lipschitz property when batch normalization is applied.
    \item Training the original SNGP takes about 95\% GPU memory of an Nvidia RTX3090. With the same number of parameters, our model needs more GPU memory as it uses the approach from \cite{trockman2021orthogonalizing} to perform the Cayley transform of convolution operators, which involves FFT and inverse FFT. In order to use a single GPU to train both models, we reduce the width of SNGP so that it has a similar amount of parameters as our model ($\sim 14$M).
\end{itemize}
Our model has a similar structure to SNGP except that we replace their bi-Lipschitz components with our proposed bi-Lipschitz networks. Note that there is no batch normalization inside our bi-Lipschitz networks. All models are trained for 200 epochs using the mini-batch stochastic gradient descent (SGD) method with batch size of 256. We adjust the learning rate based on a piecewise constant schedule. 

\paragraph{2D Rosenbrock function.} The true function is a Rosenbrock function defined by 
\[
r(x,y)=\frac{1}{200}(x-1)^2+\frac{1}{2}\bigl(y-x^2\bigr)^2.
\]
Note that we use a scaling factor of $1/200$ for the classic Rosenbrock function. The above function is non-convex but has one minimum at $(1, 1)$. We also consider the combination of the above Rosenbrock function with the following 2D Sine function:
\[
s(x,y)=0.25(\sin(8(x-1)-\pi/2)+\sin(8(y-1)-\pi/2)+2).
\]
In this case $r(x,y)+s(x,y)$ still has a unique global minimum at $(1,1)$. But there are many local minima. We take 5K random training samples from the domain $[-2,-2]\times [-1, 3]$. The proposed BiLipNet contains two monotone and Lipschitz layers (i.e., $K=2$ for \eqref{eq:bi-lip}). The nonlinear block $\gH$ has 4 hidden layers of width 128. The model size is roughly 16K. The ICNN model has 8 hidden layers with width of 180. The MLP has hidden units of $[128, 256, 256, 512]$. We trained i-ResNet and i-DenseNet with different depth and width such that the total amount of parameters is comparable with BiLipNet. 

\paragraph{Parametric Rosenbrock function.} We consider the following parametric Rosenbrock function
\[
r(x,y;p)=\frac{1}{200}(x-a)^2+\frac{1}{2}(y-bx^2)^2,\quad p=(a,b)\in [-1,1]^2.
\]
We take 10K random training data. The partially BiLipNet contains 3 orthogonal layers, and 2 monotone and Lipschitz layers (the $\gH$ block of each layer has 4 hidden layer with width 128).  The bias term of each orthogonal layer is produced by an MLP with hidden units of $[64, 128, 2]$ while the bias for those hidden units inside the $\gH$ block is generated by an MLP of $[64, 128, 256, 512]$. The model's bi-Lipschitz bound is chosen to be $(0.04, 16)$. The resulting model size is 604K.

\paragraph{ND Rosenbrock function.} We also consider the $N$-dimensional (with $N=20$) Rosenbrock function:
\[
    R(x)=\frac{1}{N-1}\sum_{i=1}^{N-1}r(x_i, x_{i+1})
\]
which is non-convex and has a unique global minimum at $(1,1, \ldots, 1)$. Besides it also has many local minima. We take 10K random samples over the domain $[-2,2]^{20}$ and do training with batch size of 200. Note that the data size is very small compared to the dimension. We then use 500K samples for testing. BiLipNet has two monotone and Lipschitz layers (i.e., $K=2$ for \eqref{eq:bi-lip}) where each layer has a nonlinear block $\gH$ with 8 hidden layer of width 256 (model size $\sim$ 2.1M). For the i-ResNet/i-DenseNet, we try different depths from 2 to 10 and observe that depth of 5 yields slightly better results. The width of hidden layer is chosen so that it has a similar amount of parameters as BiLipNet.

\subsection{Extra results}\label{sec:extra-result}

Some extra results for the bi-Lipschitz models on two-moon and CIFAR-10/100 datasets are shown in \cref{fig:sngp-357} and \cref{tab:cifar10-100-bn}, respectively. \cref{fig:rosenbrock-v2} depicts the additional results on surrogate loss learning. 

\begin{table}[ht]
\centering
\resizebox{\textwidth}{!}{  
\begin{tabular}{c|c|cc|cc|cc}
\toprule
& & \multicolumn{2}{c|}{Accuracy ($\uparrow$)} & 
\multicolumn{2}{c|}{ECE ($\downarrow$)} &
\multicolumn{2}{c}{NLL ($\downarrow$)} 
\\
Method & $c$  & Clean & Corrupted & Clean & Corrupted & Clean & Corrupted
\\
\midrule \midrule
\multicolumn{8}{c}{\textbf{CIFAR-10}} 
\\
\midrule
\multirow{3}{*}{SNGP-BN} & 0.95 & 94.7 $\pm$ 0.079 & 73.0 $\pm$ 0.461 & 0.017 $\pm$ 0.002 & 0.127 $\pm$ 0.010 & 0.166 $\pm$ 0.004 & 0.991 $\pm$ 0.054 \\
& 0.65 & 94.1 $\pm$ 0.159 & 72.3 $\pm$ 0.561 & 0.016 $\pm$ 0.000 & 0.116 $\pm$ 0.005 & 0.182 $\pm$ 0.005 & 0.985 $\pm$ 0.029 \\
& 0.35 & 92.3 $\pm$ 0.260 & 70.4 $\pm$ 0.800 & 0.008 $\pm$ 0.003 & 0.095 $\pm$ 0.007 & 0.231 $\pm$ 0.006 & 0.995 $\pm$ 0.031 \\
\midrule
\multirow{3}{*}{BiLipNet} & 0.95 & 86.2 $\pm$ 0.250 & 70.8 $\pm$ 0.469 & 0.020 $\pm$ 0.003 & 0.052 $\pm$ 0.005 & 0.423 $\pm$ 0.006 & 0.895 $\pm$ 0.020 \\
& 0.65 & 86.7 $\pm$ 0.129 & 72.8 $\pm$ 0.592 & 0.015 $\pm$ 0.005 & 0.047 $\pm$ 0.009 & 0.400 $\pm$ 0.006 & 0.830 $\pm$ 0.024 \\
& 0.35 & 84.5 $\pm$ 0.184 & 72.6 $\pm$ 0.216 & 0.010 $\pm$ 0.002 & 0.052 $\pm$ 0.004 & 0.457 $\pm$ 0.002 & 0.827 $\pm$ 0.008 \\
\midrule \midrule
\multicolumn{8}{c}{\textbf{CIFAR-100}} 
\\
\midrule
\multirow{3}{*}{SNGP-BN} & 0.95 & 72.3 $\pm$ 0.513 & 44.8 $\pm$ 0.470 & 0.071 $\pm$ 0.006 & 0.091 $\pm$ 0.006 & 1.042 $\pm$ 0.018 & 2.476 $\pm$ 0.025 \\
& 0.65 & 67.8 $\pm$ 1.006 & 41.5 $\pm$ 0.916 & 0.117 $\pm$ 0.007 & 0.092 $\pm$ 0.002 & 1.231 $\pm$ 0.035 & 2.573 $\pm$ 0.036 \\
& 0.35 & 61.9 $\pm$ 0.741 & 37.0 $\pm$ 0.660 & 0.158 $\pm$ 0.006 & 0.098 $\pm$ 0.006 & 1.510 $\pm$ 0.029 & 2.760 $\pm$ 0.043 \\
\midrule
\multirow{3}{*}{BiLipNet} & 0.95 & 51.0 $\pm$ 0.480 & 35.8 $\pm$ 0.397 & 0.230 $\pm$ 0.006 & 0.137 $\pm$ 0.007 & 2.064 $\pm$ 0.024 & 2.718 $\pm$ 0.014 \\
& 0.65 & 55.2 $\pm$ 0.426 & 39.2 $\pm$ 0.495 & 0.225 $\pm$ 0.004 & 0.137 $\pm$ 0.005 & 1.887 $\pm$ 0.021 & 2.576 $\pm$ 0.022 \\
& 0.35 & 54.4 $\pm$ 0.438 & 41.1 $\pm$ 0.200 & 0.194 $\pm$ 0.008 & 0.126 $\pm$ 0.009 & 1.876 $\pm$ 0.031 & 2.447 $\pm$ 0.016 \\
\bottomrule
\end{tabular}
}
\caption{ 
Results for SNGP-BN (SNGP with batch normalization) and BiLipNet (\textit{without} batch normalization) on CIFAR-10/100, averaged over 5 seeds. As pointed out by \cite{liu2023simple}, the batch normalization may rescale a layer’s spectral norm in unexpected ways. So there is no theoretical guarantee on bi-Lipschitz property for SNGP-BN. This may offer it extra expressive power, leading to performance improvement in both clean and corrupted accuracy for a large distortion models (i.e. $c=0.95$). For models with low distortion (i.e. $c=0.35$), BiLipNet has better accuracy for the corrupted dataset.  
}
\label{tab:cifar10-100-bn}
\end{table}

\begin{figure}[!tb]
    \centering
    \includegraphics[width=0.95\linewidth]{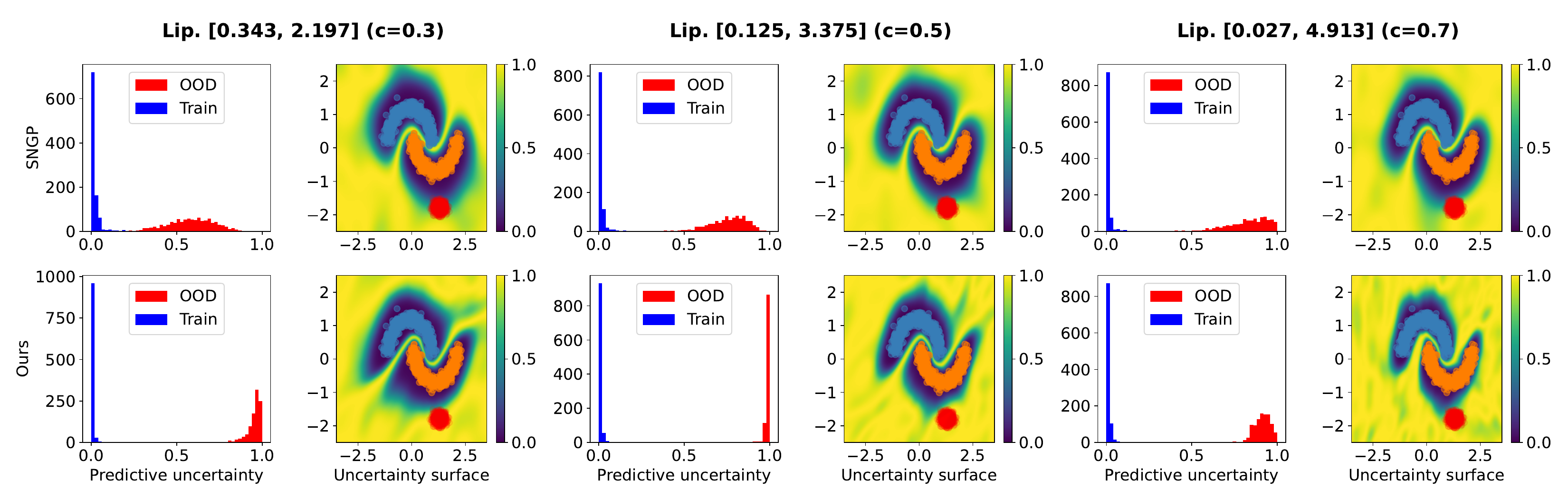}
    \caption{Uncertainty qualification via neural Gaussian process with different bi-Lipschitz bound specifications. }
    \label{fig:sngp-357}
\end{figure}

\begin{figure}[!tb]
    \centering
    \includegraphics[width=0.9\linewidth]{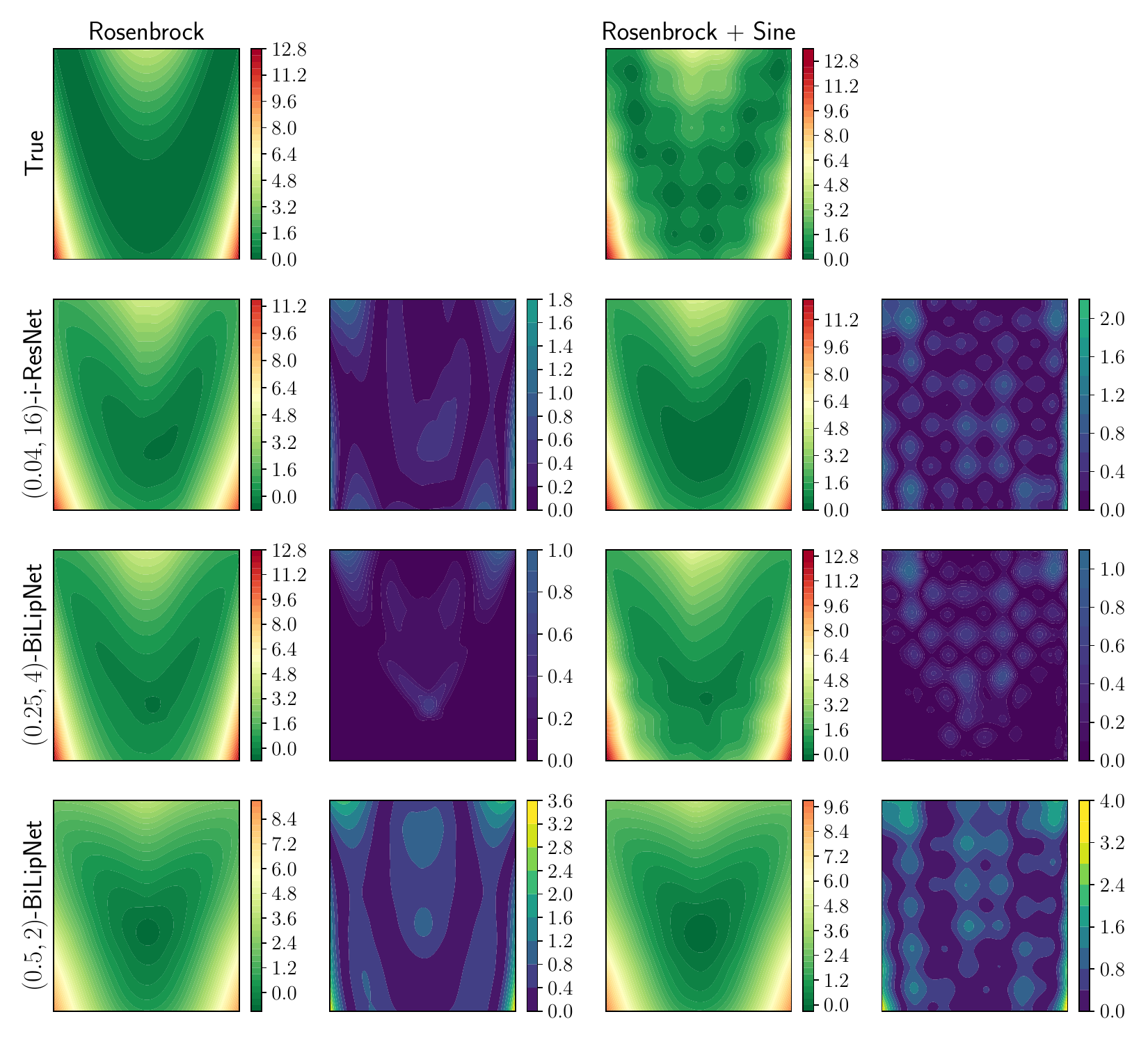} 
    \caption{Additional results for Learning a surrogate loss for the Rosenbrock and Rosenbrock + Sine functions. The first row contains the true functions while the remaining rows show learned functions and errors for various surrogate loss models. Our model (BiLipNet) has the flexibility of capturing the non-convex sub-level sets, but can also fit smoothed representations by reducing the distortion parameter.}
    \label{fig:rosenbrock-v2}
\end{figure}


\end{document}